\documentclass[runningheads]{llncs}
\usepackage[T1]{fontenc}
\usepackage{graphicx}
\DeclareSymbolFont{symbolsC}{U}{txsyc}{m}{n}
\DeclareMathSymbol{\strictif}{\mathrel}{symbolsC}{74}
\usepackage{multicol}
\usepackage{tabularx}
\usepackage{ragged2e}
\usepackage{listings}
\usepackage{graphicx}   
\usepackage{xcolor}  
\usepackage{hyperref}
\usepackage{url}         
\usepackage{multirow}     
\usepackage{amssymb}     
\usepackage{amsmath}      
\usepackage{yfonts}
\usepackage{lipsum}      
\usepackage{proof}
\usepackage{amssymb}
\usepackage{mathrsfs}
\usepackage{mathtools}
\usepackage[tableau]{prooftrees}

\newcommand{\sent}{\rightarrow}

\newtheorem{notation}{Notation}

\usepackage{color}

\begin{document}
\title{Study on Quantitative Dynamic Epistemic Logic for Belief Revision}
%
%
\author{Felipe Camargo\inst{1}\orcidID{0009-0002-8336-3669}}
\authorrunning{F. Camargo}
%
\institute{State University of Campinas (UNICAMP), Campinas-SP, Brasil \email{f247210@dac.unicamp.br}}
\maketitle              
\begin{abstract}
Belief revision is a process in which an agent begins to believe in something she previously did not. I begin the paper by presenting, based on \cite{gar,han}, postulates for belief revision that constitute the basis of the AGM theory. I will then briefly show the semantics of a modal logic introduced in \cite{prole}, which I call `$P$'. This logic formalizes static epistemic states and has greater expressive power than AGM in doing so because it captures the quantitative notion of "degrees of conviction". The third step is to introduce revision operators on $P$ and, mostly following \cite{prole}, obtain the Dynamic Epistemic Logic (DEL) I call `$P*$'. It models processes of belief revision in several ways. Original results are presented in the following two sections. The first one of these sections revolves around a formalization of AGM postulates within $P*$ by proving some theorems related to the satisfaction of those postulates by revisions defined in $P*$. The last section features an analysis of $P*$'s revisions that go beyond the mere satisfaction of postulates. I compare their formal behavior with respect to some philosophical \textit{criteria}. At last, I conclude that the functions presented in \cite{prole} are not good formalizations of the philosophical intuition behind AGM. Instead, it is captured by the function $*^0$ originally defined in this paper (but highly inspired by \cite{van}). An implementation of this function is also provided.

\keywords{Belief Dynamics  \and Dynamic Epistemic Logic \and AGM \and Belief Revision \and Modal Logic.}
\end{abstract}
\section{Introduction}

Belief revision is a process in which an epistemic agent (e.g., people, databases, institutions, or A.I.) comes to believe something new. This process is triggered by changes  in the environment or the acquisition of knowledge by the agent. It is usually assumed that the revised belief (that is, the one that comes to be believed) was not already accepted by the agent before revision. Otherwise, there would be no material change in her epistemic state. After this process, the agent accepts a new belief, which is said to be the object of that revision. In order to avoid inconsistencies when accepting the new belief, she may need to give up some of her prior beliefs. Belief revision is, therefore, a type of belief dynamics, as it changes agent's epistemic state. 
 
 The first explicit studies of this process followed the so-called AGM paradigm, named after its founders. AGM traditionally deals with one-agent scenarios, that is, situations in which no more than one agent is considered a "belief revision realizer". All other agents, if present, are considered mere elements of the world. Indeed, another aspect of AGM is separating the world from the agent, in the sense that the theory only deals with the agent's beliefs related to the world (called `first-order beliefs'), never the introspective beliefs related to epistemic states themselves (higher-order).\footnote{On multi-agent scenarios, not all higher-order beliefs are introspective (they may be about the states of other agents). For this reason, in most cases, formalizations of multi-agent scenarios must include higher-order beliefs.} Consequently, no beliefs about the epistemic state of the agent can be revised (i.e., can be treated as an object of revision).
 
 On the other hand, the DEL paradigm, despite also dealing with belief dynamics, is usually focused on multi-agent scenarios that include higher-order intersubjective beliefs. The employed tools usually come from modal logics with several modal operators, one for each epistemic attitude that each agent may hold. Despite dealing with DEL, this paper does not address multi-agent scenarios. The intended use of the DEL paradigm in this paper is to manipulate the \textit{quantitative} aspect of belief that both traditional AGM and some DELs (such as the one in \cite{qual}) do not treat as a central concern. With a quantitative account of belief, one can formally capture the notion of "degree of conviction" and use it as the main tool to represent belief revisions. For a more detailed discussion about the introduction of quantitative aspects into DELs, consult \cite{qual}.\footnote{The position of the authors of that work goes against the introduction of such aspects on DELs' account of belief dynamics. They believe that quantitative elements should be added \textit{a posteriori}. Still, I consider that the earlier introduction of such elements as fundamental aspects of the logics allow for a more intuitive and simple account of degrees of conviction.}  

 The DEL presented here, which models the intuitions behind the AGM postulates, was introduced in \cite{prole} and is called `$P*$' (by me). This system provides a plurality of dynamic functions that change the epistemic state of a given agent. The main contributions of this paper are found in Sections \ref{SAGMP*}-\ref{SAGMP*+}, in which I evaluate, through formal postulates, how well a function of $P*$ captures the intuitions behind AGM postulates. I also argue why the function $*^0$ introduced here is better than each of the others presented in \cite{prole} for capturing these intuitions. An implementation of the function is also presented, providing a practical way to apply and deal with it. Thus, I hope to enlarge the logico-philosophical understanding of the behavior that quantitative elements of belief manifest during revisions.   

\section{Brief Overview on AGM}\label{SAGM}
    
     C. Alchourrón, P. G\"ardenfors and D. Mackinson proposed in \cite{agm} several postulates establishing rules for belief revision. These have been better described in \cite{gar,han}. The postulates are related to a binary function $*: (T,\alpha)\longmapsto T^*_\alpha$, where $T$ is a consistent theory of some consequence system $L= (F_L, \vdash_L)$, \footnote{This means $T\subseteq F_L$ is a nontrivial set of formulas, that is, $T\neq F_L$ and $T$ is closed under the consequence relation $\vdash_L$, i.e., $T\vdash_L\alpha\implies\alpha\in T$.} $\alpha\in F_L$,\footnote{i. e., $\alpha$ is a formula of the language of $L$.} and $T^*_\alpha\subseteq F_L$. In principle, the postulates would make sense for several different consequence systems $L$, but it is still necessary for $L$ to be based on classical propositional logic. If this is not the case, the postulates presented in Definition \ref{AGM*} would require significant adaptations. For this reason, it is hereafter tacitly assumed that $L$ is a propositional logic containing classical propositional logic. This means that $L$ can express the classical connectives $\neg$ and $\land$, based on which the other classical connectives can be defined.      

      Let the propositional variables of $F_L$ be $p_i$ for $i\in \mathbb{N}$. For each $\Gamma\subseteq F_L$, take $Cn_L(\Gamma)$ to be $\{\alpha\in F_L:\Gamma\vdash_L \alpha\}$ (the set of $L$-consequences of $\Gamma$). As expected, when $\{\alpha,\beta\}\subseteq F_L$, the relation $\alpha\equiv_L\beta$ states that $\alpha\vdash_L\beta$ and $\beta\vdash_L\alpha$. When the context makes the consequence system $L$ obvious, I will omit the subscript of $F_L$, $Cn_L$, $\vdash_L$ and $\equiv_L$. In this scenario, formulas are meant to represent beliefs, theories formalize epistemic states of agents by representing the set of all their beliefs, and functions $*: (T,\alpha)\longmapsto T^*_\alpha$ formalizes belief revision. $L$ states which beliefs are incompatible with each other and which are inseparable. For instance, one could say that it is impossible for an agent to believe in two formulas but not in their conjunction. This would be captured by the fact that $\{\alpha,\beta\}\vdash_L\alpha\land\beta$.     

     An input of $*$ is understood as the epistemic state of an agent along with the object of revision. The output, on the other hand, is the epistemic state of the agent after the revision by the input belief. The basic postulates that rule this process are as follows:\footnote{They are called `basic' because more postulates could be added, in order to establish more rationality \textit{criteria}.}

\begin{definition}[AGM Revision]\label{AGM*}
    $*$ is an AGM-like belief revision function (over $L=(F,\vdash)$) if, and only if, for each consistent theory $T\subseteq F$ and any $\alpha, \beta\in F$:
    \begin{equation}\label{K*1}
        Cn(T^{*}_{\alpha})= T^{*}_{\alpha}
    \end{equation} 
    \begin{equation}\label{K*2}
    \alpha \in T^{*}_{\alpha}    
    \end{equation} 
    \begin{equation}\label{K*3}
        T^{*}_{\alpha}\subseteq Cn(T\cup \{\alpha\})
    \end{equation} 
    \begin{equation}\label{K*4}
        \neg \alpha \notin T \implies Cn(T\cup \{\alpha\})\subseteq T^{*}_{\alpha}
    \end{equation} 
    \begin{equation}\label{K*5}
        \vdash \neg \alpha \iff T^{*}_{\alpha} = F
    \end{equation} 
    \begin{equation}\label{K*6}
        \alpha \equiv \beta \implies T^{*}_{\alpha}=T^{*}_{\beta}
    \end{equation} 
    \begin{equation}\label{K*7}
        T^{*}_{\alpha\land \beta} \subseteq Cn(T^{*}_{\alpha}\cup \{\beta\})
    \end{equation} 
    \begin{equation}\label{K*8}
        \neg\beta\notin T^{*}_{\alpha} \implies Cn(T^{*}_{\alpha}\cup \{\beta\})\subseteq T^{*}_{\alpha\land \beta}
    \end{equation} 
\end{definition}

   Before describing the informal meaning of each postulate, it is important to state that AGM formalizes \textit{rational} belief revision. Therefore, there may be several counterexamples to the postulates in real-world scenarios. However, the existence of such scenarios does not invalidate the framework. This simply means that revisions on those counterexamples are irrational. 
    
        Postulate \ref{K*1} states that $*$ always outputs a theory. Postulate \ref{K*2} (often called the `Postulate of success'; revisions that validate it are called `successful') states that this theory succeeds in accepting the revised formula. After revision, the revised formula is believed. Postulate \ref{K*3} establishes that, after executing a revision by a formula, no agent believes in anything more than the beliefs she used to have along with the new one (and everything that follows from these). Postulate \ref{K*4} states that when the agent did not previously believe in the negation of what she now wants to accept, she comes to believe in everything that Postulate \ref{K*3} allows her to. Otherwise, she must give up some beliefs to accept the new one. Postulate \ref{K*5} states that one must only trivialize beliefs after accepting that a logical falsity is true. Hence, no matter how incompatible the new belief is with the old ones, trivialization will not occur as long as it is logically possible for the new belief to be true. Postulate \ref{K*6} states that syntax is irrelevant to revision. The same belief, even if stated in two different ways, generates the same $*$-output. Postulates \ref{K*7}-\ref{K*8} are generalizations of Postulates \ref{K*3}-\ref{K*4} relating the revision of a conjunction to iterated revision. In fact, since (by Postulates \ref{K*1} and \ref{K*4}) $T^*_\top=T$, Postulates \ref{K*3}-\ref{K*4} are instances of Postulates \ref{K*7}-\ref{K*8} for $\alpha=\top$.

     Finally, one should notice what is exactly the process formalized by those postulates and by the theorems they entail. The idea is that the revised theories $T^*_\alpha$ are the formulas in which the agent beliefs after receiving the information $\alpha$, but, among those formulas, $T^*_\alpha$ includes only the ones about what used to be true in the original state, before receiving the information. In other words, AGM does not capture any of the information that has come to be true \textit{in virtue of the revision procedure itself}. For instance, if $\alpha=p_1$, the revised sets $T^*_\alpha$ must include statements such as `$p_1$', `$p_1\lor p_2$' etc., but not statements corresponding to `it was informed that $p_1$'.\footnote{Consult \cite{qual} for more about this.} Furthermore, the only beliefs took into account are the ones encoded by formulas in $F_L$, and this set does not include expressions such as `$\beta\in T^*_\alpha$'. This means that the agent is never informed about her epistemic state. Traditional AGM do not deal with such information. The only formulas to be formally revised and believed are those related to first-order beliefs. For a detailed account of AGM, consult \cite{han}.

\section{Modal Logic With Preference Functions}\label{SP}

The AGM does not focus on the quantitative aspects of belief. Two people may hold the same beliefs but still revise them differently, depending on which beliefs are preferred (or held more firmly) by one or the other. As a rule, AGM deals with this by addressing an epistemic entrenchment order that dictates which formulas are epistemically preferred. Still, the order only has an instrumental role and operations on it are typically not addressed in detail. Epistemic entrenchment serves qualitative revisions in a minimal way in order to make them behave as expected.

On the other hand, \cite{prole} presents a prolific way to study belief revision functions in a modal setting that encodes the quantitative aspects of beliefs. Furthermore, it allows for a formalization (defined in Section \ref{SAGMP*}) of an adapted version of AGM that deals with introspective beliefs. This is done with a logic I am calling `$P$', defined by a class of models with preference functions (also called in the literature "plausibility models", "sphere-based models", etc.). The present description of $P$ differs from \cite{prole} mainly in scope, since not all information and tools provided by van Ditmarsch (such as those related to multi-agent scenarios) are actually relevant to the way the logic defines AGM-like belief revision functions.
 
 The set of formulas $F_P$ is defined by expanding the propositional signature of the previous language $F_L$ with the following unary connectives: $\square^\mathbb N$ as well as one connective $\square^x$ for each $x\in \mathbb N$. The intended meaning of each modal operator is as follows: `$\alpha$ is believed with certainty' for $\square^\mathbb{N}\alpha$ and `$\alpha$ is believed with a conviction degree $x$' for each $\square^x\alpha$. When $x=0$, $\square^x\alpha$ simply means that $\alpha$ is believed. Semantics is characterized by the following class of models:

 \begin{definition}\label{DEF PMODEL}
     A $P$-model is a triple $\mathcal{M}=(W,\pi,V)$ such that:
     \begin{enumerate}
         \item $W\neq\varnothing$ (called a set of worlds).
         \item $V:\{p_i: i\in\mathbb{N}\}\longrightarrow \mathscr{P}(W)$ (called a modal valuation).
         \item $\pi :Plaus\longrightarrow \mathbb{N}$ is such that $Plaus\subseteq W$ (called a preference function).
     \end{enumerate}
 \end{definition}

Local satisfiability ("being true in a world of a model") is recursively defined by:\footnote{When $\mathcal M$ is obvious in $\mathcal M,w\vDash\alpha$, it will be omitted.}

\begin{definition}\label{PSAT}
    $\mathcal{M}, w \vDash \alpha$ ($\alpha$ is true at $w$ in $\mathcal{M}$) iff:
    \begin{enumerate}
        \item $w \in V(\alpha)$ (for $\alpha\in\{p_i: i\in\mathbb{N}\}$).
        \item $\mathcal{M}, w\vDash \alpha_1$ and $\mathcal{M}, w\vDash \alpha_2$ (for  $\alpha=\alpha_1\land\alpha_2$).
        \item $\mathcal{M}, w\nvDash \alpha_1$(for $\alpha = \neg\alpha_1$).
        \item $\mathcal{M}, w'\vDash \alpha_1$ for all $w'$ such that $\pi(w')\leq x$ (for $\alpha=\square^x\alpha_1$).
        \item $\mathcal{M}, w'\vDash \alpha_1$ for all $w'$ such that $w'\in Plaus$ (for $\alpha=\square^\mathbb{N}\alpha_1$).
    \end{enumerate}

    \end{definition}

    There may be more items in the definition above, since the consequence system $L$ could be quite complex. However, as the consequence system is based on classical logic, the definition must include at least the mentioned items. 

    I also establish the following usual notation:
    
\begin{notation}\label{DEF1}
    For any $\Gamma,\{\alpha\}\subseteq F_P$, $\Gamma\vDash_P\alpha$ iff, for all worlds $w$ of any $P$-model $\mathcal{M}$, $\mathcal M,w\vDash\gamma$ for all $\gamma\in \Gamma$ implies $\mathcal{M},w\vDash \alpha$. When $\Gamma=\{\gamma\}$, I use $\gamma\vDash_P\alpha$ instead of $\{\gamma\}\vDash_P\alpha$. In this way, $P$ can be seen as the consequence system $(F_P,\vDash_P)$. The function $Cn_P(\cdot)$ and the relation $\equiv_P$ are defined in a way analogous to $Cn_L(\cdot)$ and $\equiv_L$.
\end{notation}

    Interpreting this system is somewhat involved. A model represents a static epistemic state of an agent. The function $\pi$ informs how plausible each world is. However, \textit{the greater $\pi(w)$ is, the less plausible $w$ is in that epistemic state} ($\pi$ may be better described as an implausibility function in this sense). For example, $\mathcal{M}, w\vDash \square^2\alpha$  means that, in $w$, $\alpha$ is true in all worlds agent $\mathcal{M}$ assigns an (im)plausibility degree of two or less. Notice that, because the $4^{th}$ clause of Definition \ref{PSAT} does not depend on $w$, $\mathcal{M}, w\vDash \square^2\alpha$ implies $\mathcal{M}\vDash \square^2\alpha$.\footnote{The notation $\mathcal{M}\vDash \alpha$ means that $\alpha$ is true in all worlds of $\mathcal{M}$.} In fact, this holds for all statements of the forms $\square^x\alpha$ or $\square^\mathbb{N}\alpha$. Furthermore, $\mathcal{M}, w\vDash \square^2\alpha$ implies $\mathcal{M}, w\vDash \square^1\alpha$ and $\mathcal{M}, w\vDash \square^0\alpha$. This is easy to check for all $x\in \mathbb{N}$ (not only 2). Therefore, the bigger $x$ is in a statement like $\mathcal{M}, w\vDash \square^x\alpha$, the higher the confidence agent $\mathcal{M}$ has in $\alpha$ is. \textit{One confidently believes in something when it holds even in very implausible worlds.}
    
    It follows naturally that $\square^\mathbb{N}$ is a knowledge operator, such that $\square^\mathbb N \alpha$ means that the agent is completely sure that $\alpha$ is true. For statements regarding worlds in Plaus, $\square^\mathbb{N}$ can, in fact, be seen as a Kripke-like operator associated with an equivalence relation that partitions $Plaus$ from $W{-}Plaus$. Consequently, $\square^\mathbb{N}$ works as an $S5$ operator when evaluated on worlds in $Plaus$. The fact that $S5$ is a standard formalization of knowledge guarantees that this semantics aligns with the intended intuition. Nevertheless, a somewhat undesirable result occurs in implausible worlds. On them, it is possible that $\mathcal{M},w\nvDash\square^\mathbb{N}\alpha\sent \alpha$. Since $\square^\mathbb{N}\alpha\sent \alpha$ holds only in plausible worlds, when using $P$ to model phenomena, one should always take into account that each world could, in principle, be in $W-Plaus$. Some restrictions on the modeling process may exclude those situations, but it is important to know that the restrictions must be introduced \textit{a posteriori} and are not a built-in feature of $P$. On the other hand, the $\square^0$ operator is considered a belief operator, since it is the weakest form of "epistemically accepting" a formula. 
    
    The fact that $\mathcal{M}, w\vDash \square^x\alpha \implies \mathcal{M}\vDash \square^x\alpha$ guarantees full positive and negative introspection. That is, $\mathcal{M}, w\vDash \square^x\alpha \implies \mathcal{M},w\vDash \square^\mathbb N\square^x\alpha$ and $\mathcal{M}, w\vDash \neg\square^x\alpha \implies \mathcal{M},w\vDash \square^\mathbb N\neg\square^x\alpha$. This is the case because (negations of) modal statements, when true, are true in all worlds. In particular, in all plausible worlds, but being true in all plausible worlds is being knowingly true. Finally, it is important to say that when $\pi(w)=x$, the intended interpretation is that the agent cannot distinguish $w$ from any other world with $\pi$-degree of $x$. Consequently, the $0$-degree worlds are exactly the worlds the agent considers to be indistinguishable from the actual.

    \begin{example}\label{EX1}
        Consider only the two propositional variables $p_0$ and $p_1$. Now take the model $\mathcal{M}$ in which $w_1\vDash p_0\land p_1$, $w_2\vDash p_0\land \neg p_1$, $w_3\vDash \neg p_0\land p_1$ and $w_4\vDash \neg p_0\land \neg p_1$. The $\pi$ function is as follows: $\pi(w_1)=0$, $\pi(w_2)=1$ and $\pi(w_3)=2$. Notice that $w_4\notin Plaus$. This model is depicted in Figure \ref{fig1}, where the smallest rectangle shows all worlds $w$ such that $\pi(w)\leq 0$, the dashed rectangle shows the ones with $\pi(w)\leq 1$, and the dotted one the worlds with $\pi(w)\leq 2$ (in this case, $\pi(w)\leq 2\iff w\in Plaus$).

\begin{figure}\label{fig1}
\includegraphics[width=0.6\textwidth]{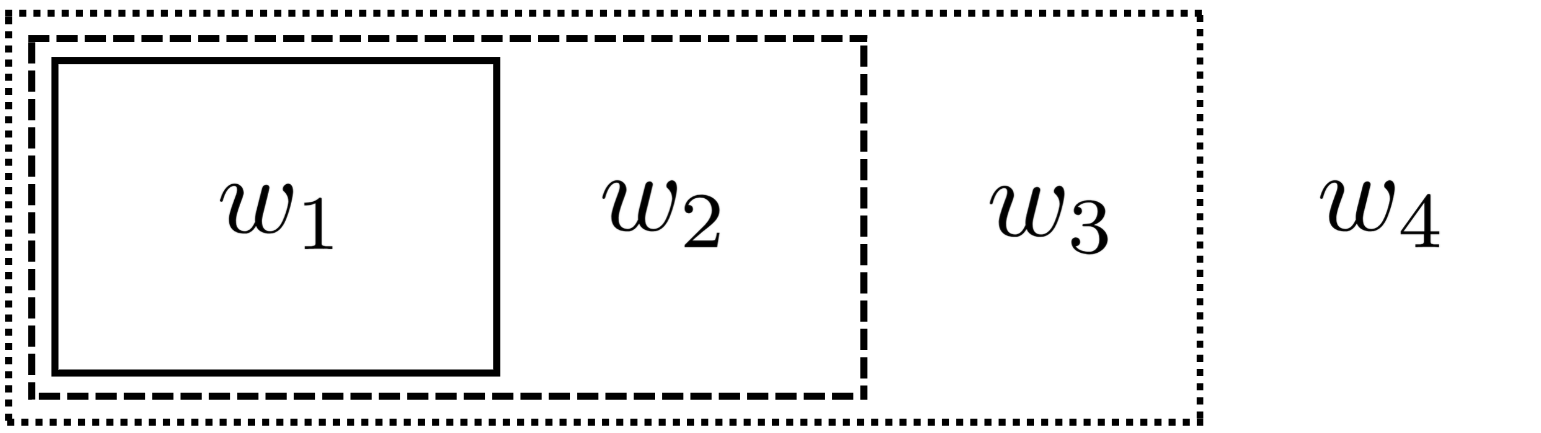}
\centering
\caption{The model $\mathcal{M}$ of Example \ref{EX1}}
\centering
\end{figure}
    
    Delimiting the areas in this way makes the process of deciding whether $\mathcal{M}\vDash \square^x\alpha$ a matter of checking whether $\mathcal{M},w_n\vDash \alpha$ for each world $w_n$ inside the shape associated with the $x^{th}$ degree. In this case, the agent knows that $p_0\lor p_1$, believes that $p_0\land p_1$, but between $p_0$ and $p_1$, she believes in $p_0$ with greater conviction ($\mathcal M\vDash\square^1p_0$, but $\mathcal M\nvDash\square^1p_1$).
    \end{example}

    $P$, then, provides a simple and intuitive way of dealing with the quantitative aspect of a belief, encoding the degree in which a formula is believed. But, as you can see, $P$ is not a DEL, since it encodes only static epistemic states. Still, as shown below, revisions can be defined over the models of $P$. Those revisions are, in fact, simply alterations on $\pi$. This puts the quantitative aspect of belief in the center of the revision process.

\section{Dynamic Modal Logic With Preference Function}\label{SP*}

    It is now possible to build, upon $P$, a $DEL$ I will call `$P*$'. The language $F_{P*}$ is obtained by introducing binary dynamic belief revision operators in $F_P$. The operators are the following six: $[*^i\alpha]\beta$ for $i\in\{0,1,2,3,4,5\}$. Each formula of the form `$[*^i\alpha]\beta$' reads as "after $i$-revising by $\alpha$, $\beta$". I will frequently consider the formula $\alpha$ as fixed and the connective $[*^i\alpha]$ as a unary modal connective.   
    
    The basis for the semantics of each $[*^i\alpha]$ is that \[\mathcal{M},w\vDash[*^i\alpha]\beta \iff\mathcal{M}_\alpha^i,w\vDash\beta\] where $\mathcal{M}^i_\alpha=(W, \pi^i_\alpha, V)$. $W$ and $V$ are exactly as in $\mathcal{M}$. This means that each $[*^i\alpha]$ operator corresponds to some binary $*^i$ function such that \[*^i:(\mathcal{M},\alpha)\longmapsto\mathcal{M}_\alpha^i=(W, \pi^i_\alpha, V)\]     
    
    Defining each operator $[*^i\alpha]\beta$ is now reduced to defining the correspondent $*^i$. This process begins by establishing, for arbitrary $\alpha\in F_{P*}$ and $\pi$ in $\mathcal{M}$, the partially revised function $\overline{\pi}^i_\alpha$, and, based on it, building the revised function $\pi^i_\alpha$ by a process called \textit{normalization}. I will now present the procedure for building the revised preference functions $\pi^i_\alpha$ (through their partially revised counterparts):

\begin{definition}\label{*}
    \begin{equation}\label{*1}
        \overline{\pi}^1_\alpha(w)=
        \begin{cases}
        \pi(w)& \text{if $\mathcal{M}, w\vDash \alpha$}\\
        \pi(w)+ 1 & \text{otherwise}
        \end{cases}
    \end{equation}

    \begin{equation}\label{*2}
        \overline{\pi}^2_\alpha(w)={\pi}(w)\;\; \text{if $\mathcal{M}, w\vDash \alpha$ (and }
        w\notin Dom(\overline{\pi}^2_\alpha)\; \; \text{otherwise)}
    \end{equation}

    \begin{equation}\label{*3}
        \overline{\pi}^3_\alpha(w)=
        \begin{cases}
        \pi(w)& \text{if $\mathcal{M}, w\vDash \alpha$}\\
        \max\{\pi(w), 1\} & \text{otherwise}
        \end{cases}
    \end{equation}

    \begin{equation}\label{*4}
        \overline{\pi}^4_\alpha(w)=
        \begin{cases}
        \pi(w)& \text{if $\mathcal{M}, w\vDash \alpha$}\\
        \max\{\pi(w') : \mathcal{M}, w'\vDash \alpha\} + 1 & \text{otherwise}
        \end{cases}
    \end{equation}

    \begin{equation}\label{*5}
        \overline{\pi}^5_\alpha(w)=
        \begin{cases}
        \pi(w)- \min\{\pi(w'): \mathcal{M}, w'\vDash \alpha\}& \text{if $\mathcal{M}, w\vDash \alpha$}\\
        \pi(w) - \min\{\pi(w'): \mathcal{M}, w'\vDash \neg\alpha\}+1 & \text{otherwise}
        \end{cases}
    \end{equation}

    \begin{equation}\label{*0}
        \overline{\pi}^0_\alpha(w)=
        \begin{cases}
        \pi(w)& \text{if $\mathcal{M}, w\vDash \alpha$}\\
        \pi(w) + \max\{\pi(w') : \mathcal{M}, w'\vDash \alpha\} +1& \text{otherwise}
        \end{cases}
    \end{equation}

    Normalization assigns a new value to each $\pi^i_\alpha(w)$ based on $\overline{\pi}^i_\alpha$. The assignment is:
    \begin{equation}\label{normalization}
        \pi^i_\alpha(w) =\overline{\pi}^i_\alpha(w) - \min\{\overline{\pi}^i_\alpha(w') : w'\in W\}
    \end{equation}
\end{definition}

    This procedure, from $\overline{\pi}^i_\alpha$ to $\pi^i_\alpha$, is the same for any $*^i$. These functions differ only in the first (partial) step. Notice that it is not the case, in general, that $Dom(\pi^i_\alpha)=Plaus$, nor $Dom(\overline \pi^i_\alpha)=Plaus$. An exception to this rule is $\overline{\pi}^2_\alpha$. Still, $\mathcal M^2_\alpha$ is a $P$-model, since its preference function ${\pi}^2_\alpha$ is still a function from a subset of $W$ into $\mathbb N$ (check Definition \ref{DEF PMODEL}). It is important to point out that functions $*^1-*^5$ were proposed by van Ditmarsch in \cite{prole}. $*^0$ is being introduced by me based on an operator van Benthem defined in \cite{van} for another formal system under the name `lexicographic upgrade'.

    The purpose of each $*^i$ is to decide which epistemic state will an agent modeled by $\mathcal{M}$ have after receiving evidence that $\alpha$. This epistemic state is $\mathcal{M}^i_\alpha$. It is now clear why $W$ and $V$ remain unchanged: receiving evidence does not change the state of affairs. It only changes the epistemic preferences of the agent. Notice that this notion follows the usual AGM posture of separating the agent from the world and ignoring how the epistemic deeds, on themselves, are alterations of the world. This interpretation also explains why most of the partial revision processes increase the degrees of worlds that do not validate $\alpha$ (and, in the case of $*^2$, even eliminate them from $Plaus$) while keeping the degree of the other ones unchanged. This is because, when receiving evidence that $\alpha$, the situations that do not validate $\alpha$  should become less plausible relative to the ones that do. 
    
    Normalization guarantees that the final function $\pi^i_\alpha$ is such that $0\in Im(\pi^i_\alpha)$\footnote{Where $Im(f)=\{x: x=f(y)\text{ for some }y\in Dom(f)\}$ denotes the image of a function $f$.} by shifting all the $\pi^i_\alpha$-values towards $0$. Otherwise, several revisions would output agents with trivial beliefs, that is, models $\mathcal M^i_\alpha$ such that $\mathcal M^i_\alpha\vDash \square^0\alpha$ for all $\alpha\in F_{P*}$. Notice that normalization either keeps the world's degrees unchanged or reduces them. This means that, when the partial step of a function does not change the degree of a world, the function as a whole (when the function $\pi$ is directly compared to $\pi^i_\alpha$, without considering the partial step) may still reduce it. On the other hand, when the function $\pi$ is directly compared to $\pi^i_\alpha$, it may be the case that some worlds with degree unchanged had, in fact, the degree increased in the partial step and decreased back to the original value. 

    \begin{example} Consider the model $\mathcal{M}$ of Figure \ref{fig1}. $\overline{\pi}^0_{\neg p_0}$ is as follows:
    $\overline{\pi}^0_{\neg p_0}(w_3)=2$ (since $\mathcal{M}, w_3\vDash \neg p_0$), but $\overline{\pi}^0_{\neg p_0}(w_1)=3$ and $\overline{\pi}^0_{\neg p_0}(w_2)=4$ (by Definition \ref{*}(\ref{*0})). Definition \ref{*}(\ref{normalization}) establishes, then, that $\pi^0_\alpha(w_1)=1$, $\pi^0_\alpha(w_2)=2$ and $\pi^0_\alpha(w_3)=0$. $*^0$ separates the worlds into two groups: those that validate the revised formula and those that do not. Then, it assigns the lowest values to the first group and higher values to the second one. But the function preserves the internal order of both groups. In this concrete case, after receiving evidence that $\neg p_0$, the agent believes in $\neg p_0$, but also in $p_1$, since she already believed in it and nothing about $p_1$ was learned. Indeed, her conviction in $p_1$ is greater than in $\neg p_0$. Her knowledge remains unchanged.
    \end{example}

    Here follows a brief explanation of the other "revision policies" presented. $*^1$ captures cases in which the agent receives a mild evidence for a formula, making all situations in which it is false a bit more implausible. Consequently, it makes all situations in which the formula is true relatively more reasonable. $*^2$, on the other hand, makes the revised formula completely and eternally certain. This policy may formalize situations in which an agent has contact with a formal proof of the revised formula. $*^3$ makes all the situations that do not validate the revised formula as plausible as they can be while still allowing the agent to accept the revised formula.\footnote{"Accepting" here means that the formula is true in all worlds with degree $0$ (i.e., believed).} This revision policy can be seen as a minimal way of making the revision successful. $*^4$ is similar to $*^0$, but instead of preserving the order between the worlds that do not validate the revised formula, the agent simply considers all of them equally implausible. Finally, $*^5$, as $*^0$, preserves the internal order between the $\alpha$-worlds and the $\neg\alpha$-worlds, but instead of assigning to the $\neg\alpha$-worlds degrees higher than the ones of  any $\alpha$-worlds, it considers the $\neg\alpha$-worlds as plausible as they can be, while still not believed to be possibly the case ($*^5$ can be seen as a combination of $*^3$ and $*^0$). The reader should consider some examples involving functions $*^1-*^5$.
    
    \begin{proposition}
        For any $*^i$,  if $\alpha=\square^x\alpha_1$ or $\alpha=\square^\mathbb{N}\alpha_1$, $\mathcal{M},w\vDash [*^i\beta]\alpha \implies \mathcal{M}\vDash [*^i\beta]\alpha$ and, if $\alpha \in F_L$, $\mathcal{M},w\vDash [*^i\beta]\alpha \iff \mathcal{M},w\vDash\alpha$.
    \end{proposition}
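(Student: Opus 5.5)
The plan is to reduce both claims to the semantic clause for the dynamic operators, $\mathcal{M},w\vDash[*^i\beta]\alpha \iff \mathcal{M}^i_\beta,w\vDash\alpha$. We then use two facts about the revised model $\mathcal{M}^i_\beta=(W,\pi^i_\beta,V)$.

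\textbf{First fact.} The revised model is again a $P$-model. Its worlds and valuation are those of $\mathcal{M}$; only the preference function changes. We should check, for each $i\in\{0,\dots,5\}$, that $\pi^i_\beta$ is a function from a subset of $W$ into $\mathbb{N}$.

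The only delicate points concern well-definedness of the partial step and of normalization. The $\max$ or $\min$ in Definition \ref{*}(\ref{*4}), (\ref{*5}) and (\ref{*0}) may be taken over an empty or infinite set. In such cases we adopt whatever convention makes the partial function well defined, for instance leaving the affected worlds out of the domain. We also use that $\min$ of a nonempty subset of $\mathbb{N}$ (or of $\mathbb{Z}$ bounded below) exists, so that normalization yields values in $\mathbb{N}$.

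I expect this bookkeeping to be the main obstacle, especially for $*^5$. There the partial values may be negative before normalization, and one must confirm that subtracting the minimum restores nonnegativity. Once $\mathcal{M}^i_\beta$ is known to be a $P$-model, every observation made about $P$-models in Section \ref{SP} applies to it.

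\textbf{First claim.} Suppose $\alpha=\square^x\alpha_1$ or $\alpha=\square^\mathbb{N}\alpha_1$. Clauses 4 and 5 of Definition \ref{PSAT} do not mention the evaluation world $w$. As remarked after Notation \ref{DEF1}, truth of such a formula at one world of a $P$-model implies its truth at every world of that model.

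So from $\mathcal{M},w\vDash[*^i\beta]\alpha$ we get $\mathcal{M}^i_\beta,w\vDash\alpha$. Hence $\mathcal{M}^i_\beta,w'\vDash\alpha$ for every $w'\in W$. Since $\mathcal{M}^i_\beta$ has the same set of worlds as $\mathcal{M}$, this gives $\mathcal{M},w'\vDash[*^i\beta]\alpha$ for all $w'$, that is, $\mathcal{M}\vDash[*^i\beta]\alpha$.

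\textbf{Second claim.} Let $\alpha\in F_L$. We prove by induction on $\alpha$ that, for any two $P$-models $(W,\pi,V)$ and $(W,\pi',V)$ sharing $W$ and $V$, and any $w\in W$, $\alpha$ is true at $w$ in one model iff it is true at $w$ in the other.

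The base case is clause 1 of Definition \ref{PSAT}, which depends only on $V$. The steps for $\neg$ and $\land$ follow by clauses 2 and 3. Any further classical connectives of $L$ are handled in the same way, since their clauses likewise only refer back to truth at $w$ and never to $\pi$.

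Applying this to $\pi$ and $\pi^i_\beta$ gives $\mathcal{M}^i_\beta,w\vDash\alpha\iff\mathcal{M},w\vDash\alpha$. Combined with the semantic clause for $[*^i\beta]$, this yields the stated equivalence.
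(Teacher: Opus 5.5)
Your proposal is correct and follows the paper's own proof. Both arguments reduce the claims to the clause $\mathcal{M},w\vDash[*^i\beta]\alpha\iff\mathcal{M}^i_\beta,w\vDash\alpha$, use that formulas $\square^x\alpha_1$ and $\square^\mathbb{N}\alpha_1$ are world-independent in the $P$-model $\mathcal{M}^i_\beta$ (which has the same $W$ as $\mathcal{M}$), and use that $F_L$-formulas depend only on $W$ and $V$. You only make explicit what the paper takes for granted, namely that $\mathcal{M}^i_\beta$ is a $P$-model and the induction over $F_L$; your worry about $*^5$ is unnecessary, since its partial values $\overline{\pi}^5_\beta$ are already nonnegative.
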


        \textit{Proof: }For the first statement, assume that $\alpha=\square^x\alpha_1$ or $\alpha=\square^\mathbb{N}\alpha_1$ and that $\mathcal{M},w\vDash [*^i\beta]\alpha$, where $\mathcal{M}=(W,\pi, V)$. Therefore, $\mathcal{M}^i_\beta,w\vDash\alpha$. But since $\mathcal{M}^i_\beta$ is a $P$-model, it implies that $\mathcal{M}^i_\beta,w'\vDash\alpha$ for each world $w'\in W$. From this, it follows that $\mathcal{M},w'\vDash [*^i\beta]\alpha$ for each world $w'\in W$, that is, $\mathcal{M}\vDash [*^i\beta]\alpha$ (this is allowed because functions $*^i$ do not change the set $W$ of worlds). 

        For the second statement, $\alpha\in F_L$. Therefore, its truth conditions, as seen in Definition \ref{PSAT}, depend just on $W$ and $V$ and, therefore, are the same for $\mathcal{M}$ and $\mathcal{M}^i_\beta$. This concludes the proof. $\blacksquare$
    
    There's not very much to prove about arbitrary $*^i$ functions. This method of defining revision functions is very general and allows for modeling a wide range of interpretations of "receiving evidence". For instance, in general $[*^i\alpha_1][*^i\alpha_2]\alpha$ is not equivalent to $[*^i(\alpha_1\land \alpha_2)]\alpha$, even when $\alpha_1$ and $\alpha_2$ are compatible formulas of $F_L$.\footnote{i.e. when $\{\alpha_1,\alpha_2\}\nvdash_L\bot$.} Higher-order revisions about revisions, as well as revisions about beliefs, may also allow some strange results. An interesting example would be Moorean sentences, that is, sentences of the form $\alpha\land\neg\square^0\alpha$. When revised, those statements could never output a model in which they are believed, since $\square^0(\alpha\land\neg\square^0\alpha)$ is logically false in $P$. My interest in this paper, however, is analyzing the functions through AGM Postulates \ref{K*1}-\ref{K*8}. Those abnormal cases will not be addressed. 

\section{AGM for $P*$}\label{SAGMP*}
    The bases for the study of AGM within $P*$ are the following definitions I have formalized based on some informal comments in \cite{prole} and on philosophical \textit{desiderata} I consider to be faithful to the intended purpose of the AGM Postulates \ref{K*1}-\ref{K*8}.

    \begin{definition}\label{BSET}
        The belief set of a $P$-model $\mathcal{M}=(W,\pi,V)$ (such that $0\in Im(\pi)$) is the set $M_\mathcal M=\{\alpha\in F_{P*}:\mathcal{M}\vDash\square^0\alpha\}$
    \end{definition}

    \begin{definition}\label{RBSET}
        For each model $\mathcal{M}=(W,\pi,V)$, function $*^i$ and formula $\alpha\in F_L$ (such that $\mathcal{M}\nvDash\square^\mathbb{N}\neg \alpha$), the revised belief set of $\mathcal M$ is $(M_\mathcal{M})^i_\alpha=M_{\mathcal{M}^i_\alpha}$ (or, equivalently, $(M_\mathcal{M})^i_\alpha=\{\beta\in F_{P*}:\mathcal{M}\vDash[*^i\alpha]\square^0\beta\}$)
    \end{definition}

    From now on the subscript `$\mathcal{M}$' on `$M_\mathcal M$' will be omitted. Before stating the main definition proposed in this text, recall Notation \ref{DEF1}. From now on, the generalized version for $P*$ instead of $P$ (which define the function $Cn_{P*}(\cdot)$ and the relations $\vDash_{P*}$ and $\equiv_{P*}$) will also be used.\footnote{For $\Gamma\cup\{\alpha,\beta\}\subseteq F_{P*}$, both $\Gamma\vDash_{P*}\alpha$ and $\alpha\in Cn_{P*}(\Gamma)$ mean that: for all $P$-models $\mathcal M$, each of it's worlds $w$ is such that if $\mathcal M,w\vDash\gamma$ for all $\gamma\in \Gamma$, $\mathcal M,w\vDash\alpha$. $\alpha\equiv_{P*}\beta$ means $\alpha\vDash_{P*}\beta$ and $\beta\vDash_{P*}\alpha$.}

    \begin{definition}\label{SAT}
        A function $*^i$ of the logic $P*$ is said to satisfy an AGM Postulate when it satisfies the following analogous version of them for any model $\mathcal{M}$, any formulas $\alpha, \beta\in F_L$ and $\gamma\in F_{P*}$:
        \begin{equation}\label{M*1}
            Cn_{P*}(M^i_\alpha)=M^i_\alpha
        \end{equation}
        \begin{equation}\label{M*2}
            \alpha\in M^i_\alpha
        \end{equation}
        \begin{equation}\label{M*3}
        \begin{split}
            \text{If }\gamma\in M^i_{\alpha} \text{, then (for all } w\in W\text{) }\pi(w)=0\text{ and } \mathcal{M},w\vDash\alpha \\\text{imply } \mathcal{M}^i_{\alpha},w\vDash\gamma
        \end{split}
        \end{equation}
        \begin{equation}\label{M*4}
            \begin{split}\text{If }\neg \alpha \notin M \text{ and (for all } w\in W\text{) }\pi(w)=0\text{ and } \mathcal{M},w\vDash\alpha \\\text{imply } \mathcal{M}^i_{\alpha},w\vDash\gamma\text{, then } \gamma\in M^i_{\alpha}
            \end{split}
        \end{equation}
        \begin{equation}\label{M*6}
            \alpha \equiv_L\beta \implies M^i_\alpha=M^i_\beta
        \end{equation}
        \begin{equation}\label{M*7}
        \begin{split}
            \text{If }\gamma\in M^i_{\alpha\land\beta} \text{, then (for all } w\in W\text{) }\pi^i_\alpha(w)=0\text{ and } \mathcal{M},w\vDash\beta \\\text{imply } \mathcal{M}^i_{\alpha\land\beta},w\vDash\gamma
        \end{split}
        \end{equation}
        \begin{equation}\label{M*8}
            \begin{split}\text{If }\neg \beta \notin M^i_\alpha \text{ and (for all } w\in W\text{) }\pi^i_\alpha(w)=0\text{ and } \mathcal{M},w\vDash\beta \\\text{imply } \mathcal{M}^i_{\alpha\land\beta},w\vDash\gamma\text{, then } \gamma\in M^i_{\alpha\land\beta}
            \end{split}
        \end{equation}
    \end{definition}

    The restriction in Definition \ref{BSET} is established because, on models with no worlds of degree 0, every formula is trivially believed. But belief revision is executed only on consistent theories (recall Definition \ref{AGM*}). The restriction that $\alpha$ must not be knowingly false in Definition \ref{RBSET} may also be questioned. It exists because the intended use of implausible worlds is to model complete, eternal, and immutable certainty. Yet, AGM revisions are formalizations of agents that receive sufficient evidence for believing in the revised formula, and one must never come to believe in something known to be false. Formally, the absence of this restriction would imply that some revisions said to be successful\footnote{Or, in van Ditmarsch's terminology, propositionally successful.} in \cite{prole} would not satisfy Definition \ref{SAT}(\ref{M*2}) (known falsehoods would not be accepted even after revisions by them). Other revisions would trivialize, even when revised formulas are not logical falsities (revisions by known falsehoods would also output trivial theories). The restriction, therefore, is important for adequacy to both van Ditmarsch's and AGM's intuitions. 

    Notice that `$\alpha\in F_L$' in Definition \ref{BSET} is also a restriction. It is formally necessary for avoiding paradoxes, since, without it, no $*^i$ would be successful. The Moorean paradox $p_0\land\neg\square^0p_0$ can, in principle, be used in revision operators, but $\square^0(p\land\neg\square^0p)$ is false in any model. Without this restriction, no function would satisfy Definition  \ref{SAT}(\ref{M*2}). Furthermore, the purpose of this paper is not to include introspective revisions in the scope of "revisable" formulas. The main goal is, instead, to include the quantitative aspects of beliefs. Still, this approach to belief revision naturally allows the presence of formulas in $F_{P*}-F_L$ as elements of belief sets. This means that, despite the "revisable" beliefs being only the first-order ones, and never the introspective ones, the formulas that are \textit{believed} and \textit{discovered through revision} do include higher-order beliefs. Despite the fact that the agent never receives evidence about her own epistemic state, introspection is formalized, guaranteeing that agents know their own epistemic states. 

    Definition \ref{SAT} demands some adaptations with respect to the usual AGM postulates. The main one being the somewhat complex nature of  Definition \ref{SAT}(\ref{M*3},\ref{M*4}) (and the corresponding generalizations \ref{M*7},\ref{M*8}) when compared to Postulates \ref{K*3}-\ref{K*4} (as well as Postulates \ref{K*7}-\ref{K*8}). Despite the appearance,  Definition \ref{SAT}(\ref{M*3},\ref{M*4},\ref{M*7},\ref{M*8}), do, in fact, capture the same intuitive meaning of the original Postulates \ref{K*3}-\ref{K*4} and \ref{K*7}-\ref{K*8}, but know generalized for dealing with all the formulas $\gamma\in F_{P*}$, instead of only the ones in $F_L$. The main difference between, for instance, Postulate \ref{K*3} and  Definition \ref{SAT}(\ref{M*3}), is that $\gamma\in Cn(T\cup\{\alpha\})$ was transported to the modal setting as the expression $\forall w((\pi(w)=0\text{ and } \mathcal{M},w\vDash\alpha)\text{ imply } \mathcal{M}^i_{\alpha},w\vDash\gamma)$. This last expression states that, after revising $\alpha$, $\gamma$ becomes true in all the most plausible worlds that validate $\alpha$. Consequently, $\gamma$ comes to be true after the occurrence of any successful revision by $\alpha$. This means that, in the context of a belief revision realized by an agent, $\gamma$ is entailed by $\alpha$. This seems to be the adequate generalization of the notion of entailment stated by the expression $\gamma\in Cn(T\cup\{\alpha\})$. This is the case because, with this notion, the belief revision functions $*^i$ that satisfy the adapted versions of Postulates \ref{K*3}-\ref{K*4} and \ref{K*7}-\ref{K*8} are exactly the ones said to do so in \cite{prole}. Other adaptations of $\gamma\in Cn(T\cup\{\alpha\})$, that may be more intuitive (such as the simple $M\cup\{\alpha\}\vDash_{P*}\gamma$), would be too narrow, and wouldn't validate the informal statements made in \cite{prole}. The same translation of $\gamma\in Cn(T\cup\{\alpha\})$ to $\forall w((\pi(w)=0\text{ and }\mathcal{M},w\vDash\alpha)\text{ imply } \mathcal{M}^i_{\alpha},w\vDash\gamma)$ happens in Definition \ref{SAT}(\ref{M*4}), and explains it's unexpected complex structure.  Definition \ref{SAT}(\ref{M*7},\ref{M*8}) are just the foreseeable generalizations of  Definition \ref{SAT}(\ref{M*3},\ref{M*4}). 
    
     Notice that Definition \ref{SAT} lacks an equivalent of Postulate \ref{K*5}. This postulate is inapplicable given the restriction in Definition \ref{RBSET} that $\alpha$ must not be knowingly false for $M^i_\alpha$ to exist. All logical falsities in $L$ are knowingly false (being false in all worlds implies being false in $Plaus$). Therefore, $M^i_\alpha$ does not exist when $\vdash_L\neg\alpha$. Still, the restriction in Definition \ref{RBSET} guarantees that each $*^i$ preserves the intuition behind Postulate \ref{K*5}, as shown in Proposition \ref{PROP}. That is because the purpose of outputting the trivial theory after revising by a contradiction is outputting nonsense: a set unmanageable by AGM-like belief revision functions. The only difference is that, in this $P*$-based formalization, such unmanageable theories are never reached in the first place.

   Definitions \ref{BSET}-\ref{SAT}, by encoding similar intuitive meanings as the traditional AGM postulates, allow us to evaluate how well each function can formalize an AGM-like belief revision. The rest of this paper is devoted for that. 

    \begin{proposition}
        $*^1$ and $*^3$ do not satisfy Definition \ref{SAT}.
    \end{proposition}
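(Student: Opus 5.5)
The natural failure is success, clause (\ref{M*2}) of Definition \ref{SAT}. Both $*^1$ and $*^3$ move the $\neg\alpha$-worlds only by a bounded amount: $*^1$ adds $1$, and $*^3$ raises them to at most $\max\{\pi(w),1\}$. If a $\neg\alpha$-world stays at degree $0$ after normalization, then $\alpha$ is not believed. So it suffices to exhibit, for each function, one model $\mathcal M$ and one $\alpha\in F_L$ with $\mathcal M\nvDash\square^\mathbb N\neg\alpha$ (so that $M^i_\alpha$ is defined by Definition \ref{RBSET}) and $0\in Im(\pi)$, such that $\alpha\notin M^i_\alpha$, i.e.\ $\mathcal M^i_\alpha\nvDash\square^0\alpha$.

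For $*^1$ I will use a model with a single variable $p_0$ and two worlds $w_1\vDash\neg p_0$ and $w_2\vDash p_0$, with $\pi(w_1)=0$ and $\pi(w_2)=2$. Revise by $\alpha=p_0$. This is legitimate because $w_2\in Plaus$ satisfies $p_0$, so $\mathcal M\nvDash\square^\mathbb N\neg p_0$. Definition \ref{*}(\ref{*1}) gives $\overline\pi^1_{p_0}(w_1)=1$ and $\overline\pi^1_{p_0}(w_2)=2$. Normalization (\ref{normalization}) subtracts $1$, so $\pi^1_{p_0}(w_1)=0$. Since $w_1\nvDash p_0$, we get $\mathcal M^1_{p_0}\nvDash\square^0p_0$, and hence $p_0\notin M^1_{p_0}$.

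For $*^3$ I will use the same model. By Definition \ref{*}(\ref{*3}), $\overline\pi^3_{p_0}(w_1)=\max\{0,1\}=1$ and $\overline\pi^3_{p_0}(w_2)=2$. Normalization again subtracts $1$, so $w_1$ returns to degree $0$ and success fails in the same way. The model in Example \ref{EX1}, revised by $\neg p_0$, would also work, since its $\neg p_0$-world has degree $2$. Either choice is fine; I will use the two-world model for uniformity.

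The only delicate point is to make sure that normalization really pulls the $\neg\alpha$-world back to $0$. This requires that every $\alpha$-world have degree at least $2$, and that the minimum of $\overline\pi$ over all of $W$ be attained at the $\neg\alpha$-world. I will also check that both worlds lie in $Dom(\pi)$, so that the minimum in (\ref{normalization}) is taken over defined values. The side conditions of Definitions \ref{BSET}--\ref{RBSET} must also be verified explicitly. Once these are checked, failure of (\ref{M*2}) alone shows that neither function satisfies Definition \ref{SAT}, since satisfying the definition requires every listed postulate to hold.
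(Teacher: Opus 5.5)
Your proof is correct and follows the paper's route. The paper also refutes only the success clause (\ref{M*2}) with a small finite counterexample: it uses four worlds with $\pi(w_1)=\pi(w_2)=0$ on the $p_0$-worlds and $\pi(w_3)=1$, and revises by $\neg p_0$, so that after normalization all plausible worlds tie at degree $0$. Your two-world model works just as well, though, as the paper's example shows, $\alpha$-worlds of degree $1$ would already suffice, so ``at least $2$'' is only a sufficient condition, not a requirement.
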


    \textit{Proof: }The following counterexample shows that  $*^1$ and $*^3$ do not satisfy Definition \ref{SAT}(\ref{M*2}): As before, I'll draw attention only for the fragment of $F_{P*}$ with the two propositional variables $p_0\text{ and }p_1$. Consider the model $\mathcal{M}=(W,\pi,V)$ in which, again, $w_1\vDash p_0\land p_1$, $w_2\vDash p_0\land \neg p_1$, $w_3\vDash\neg p_0\land p_1$ and $w_4\vDash\neg p_0\land\neg p_1$. Moreover, $\pi(w_1)=\pi(w_2)=0$, $\pi(w_3)=1$ and $w_4\notin Plaus$. In $\mathcal{M}^1_{\neg p_0}$, $\pi^1_{\neg p_0}(w_1)=\pi^1_{\neg p_0}(w_2)=\pi^1_{\neg p_0}(w_3)=0$. Notice, now, that the same happens in $\mathcal{M}^3_{\neg p_0}$ (i.e., $\pi^3_{\neg p_0}(w_1)=\pi^3_{\neg p_0}(w_2)=\pi^3_{\neg p_0}(w_3)=0$). Therefore, $\mathcal{M}^1_{\neg p_0}\nvDash\square^0\neg p_0$ and $\mathcal{M}^3_{\neg p_0}\nvDash\square^0\neg p_0$, that is, $\neg p_0\notin M^1_{\neg p_0}$ and $\neg p_0\notin M^3_{\neg p_0}$. $\blacksquare$

     Notice that, although neither $*^1$ or $*^3$ induce an AGM-like belief revision function, they do, indeed, represent processes in which an agent comes to consider a formula more plausible in some way. In this sense, then, each of them captures a notion of belief revision. Not being an AGM-like belief revision does not mean not being useful or valuable for belief dynamics.

    \begin{theorem}\label{T16}
    All $*^i$ functions presented satisfy Definition \ref{SAT}(\ref{M*1},\ref{M*6}).
    \end{theorem}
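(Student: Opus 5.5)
The plan is to prove the two postulates separately. Neither depends on the particular partial step $\overline{\pi}^i_\alpha$. Both follow from two facts: $\mathcal{M}^i_\alpha$ is itself a $P$-model on the same worlds $W$ and valuation $V$, and $\overline{\pi}^i_\alpha$ consults $\alpha$ only through the set of worlds of $\mathcal{M}$ where $\alpha$ holds.

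For Definition \ref{SAT}(\ref{M*1}), I write $\mathcal{N}=\mathcal{M}^i_\alpha$, so that by Definition \ref{RBSET} we have $M^i_\alpha=\{\beta:\mathcal N\vDash\square^0\beta\}$. The inclusion $M^i_\alpha\subseteq Cn_{P*}(M^i_\alpha)$ is trivial, so the work is in the converse.

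\begin{enumerate}
\item Suppose $M^i_\alpha\vDash_{P*}\gamma$. I fix any $w'$ with $\pi^i_\alpha(w')=0$ and aim to show $\mathcal N,w'\vDash\gamma$.
\item By the argument of the Proposition in Section \ref{SP*}, $\mathcal N\vDash\square^0\beta$ for each $\beta\in M^i_\alpha$. By clause 4 of Definition \ref{PSAT}, this gives $\mathcal N,w'\vDash\beta$ for each $\beta\in M^i_\alpha$.
\item Since $\mathcal N$ is a $P$-model and $w'$ is one of its worlds, the definition of $\vDash_{P*}$ yields $\mathcal N,w'\vDash\gamma$.
\item As $w'$ was arbitrary among the $0$-degree worlds, $\mathcal N\vDash\square^0\gamma$, i.e.\ $\gamma\in M^i_\alpha$.
\end{enumerate}

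This uses that $\vDash_{P*}$ quantifies over all $P$-models, and that dynamic formulas inside $\gamma$ are evaluated in $\mathcal N$ by the same recursive clauses. That works because $*^j$ applied to $\mathcal N$ is defined exactly as for any model.

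For Definition \ref{SAT}(\ref{M*6}), I take $\alpha\equiv_L\beta$ with $\alpha,\beta\in F_L$. The key lemma is that $V(\alpha)=V(\beta)$ in every $P$-model, i.e.\ $\mathcal M,w\vDash\alpha\iff\mathcal M,w\vDash\beta$ for all $w$. I would argue this from soundness: the truth of $L$-formulas depends only on $W$ and $V$, by the second part of the Proposition in Section \ref{SP*}. Hence each world together with $V$ determines an $L$-valuation respecting $\vdash_L$, and $\alpha\vdash_L\beta$ transfers to truth at worlds.

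I then inspect each clause (\ref{*1})--(\ref{*0}). In every case, $\overline{\pi}^i_\alpha(w)$ is determined by $\pi$, by whether $w$ satisfies $\alpha$, and by $\min$ or $\max$ of $\pi$ over $\{w':\mathcal M,w'\vDash\alpha\}$ or over its complement. These sets coincide for $\alpha$ and $\beta$, so $\overline{\pi}^i_\alpha=\overline{\pi}^i_\beta$ as partial functions, including the domain in the case of $*^2$. Normalization (\ref{normalization}) then gives $\pi^i_\alpha=\pi^i_\beta$, hence $\mathcal M^i_\alpha=\mathcal M^i_\beta$ and $M^i_\alpha=M^i_\beta$. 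The existence condition $\mathcal M\nvDash\square^\mathbb N\neg\alpha$ is also invariant under this equivalence.

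The main obstacle is the lemma linking $\equiv_L$ to equality of truth sets. $L$ is only assumed to extend classical logic, so I would state explicitly that the semantic clauses for any extra connectives of $L$ are sound for $\vdash_L$. This is the tacit assumption implicit in the remark after Definition \ref{PSAT}. A minor side point for (\ref{M*1}) is that a model $\mathcal N$ in which $\pi^i_\alpha$ has empty domain has a trivial belief set, which is closed under $Cn_{P*}$ anyway, so the argument goes through.
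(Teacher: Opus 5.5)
Your proof is correct. For Definition~\ref{SAT}(\ref{M*6}) it follows the paper's own route:
\begin{itemize}
\item from $\alpha\equiv_L\beta$, conclude that $\alpha$ and $\beta$ hold at the same worlds;
\item read off $\overline{\pi}^i_\alpha=\overline{\pi}^i_\beta$ from the partial-step clauses;
\item normalize.
\end{itemize}
You are more explicit than the paper about two things: the tacit soundness assumption for any extra connectives of $L$, and the invariance of the existence condition $\mathcal M\nvDash\square^{\mathbb N}\neg\alpha$.

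For Definition~\ref{SAT}(\ref{M*1}) your route genuinely differs. The paper passes to the submodel $\mathcal M^i_\alpha|_0=(W_0,\pi_0,V_0)$ on the degree-$0$ worlds. It then argues through the claimed equivalence $\gamma\in M^i_\alpha\iff\mathcal M^i_\alpha|_0\vDash\gamma$. You instead take $\mathcal M^i_\alpha$ itself, at each of its degree-$0$ worlds, as an instance of the quantification over all $P$-models in $\vDash_{P*}$.

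Your version is shorter, and it also avoids a real problem in the paper's argument. The claimed equivalence holds for $\gamma\in F_L$, but it fails for modal and dynamic formulas. Cutting away the worlds of positive degree changes what $\square^x$ (for $x\geq 1$), $\square^{\mathbb N}$ and the dynamic operators range over. Here is an example:
\begin{itemize}
\item Take $W=\{u,v\}$ with $u\vDash p_0$, $v\nvDash p_0$ and $\pi(u)=\pi(v)=0$.
\item Revising by $p_0$ with $*^0$ (or $*^4$) gives $\pi^0_{p_0}(u)=0$ and $\pi^0_{p_0}(v)=1$.
\item Then $\neg\square^1p_0\in M^0_{p_0}$ and $\square^1p_0\notin M^0_{p_0}$.
\item Yet $\square^1p_0$ holds throughout the restriction $\mathcal M^0_{p_0}|_0$.
\end{itemize}
So both directions of the paper's equivalence break, and with them the step from $M^i_\alpha\vDash_{P*}\beta$ to $\beta\in M^i_\alpha$ for modal $\beta$. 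Your argument needs only that $\vDash_{P*}$ is a local consequence relation over all $P$-models, with the dynamic clauses interpreted uniformly in $\mathcal M^i_\alpha$. You state that assumption correctly, and it is exactly what makes the closure argument go through.
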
    

    \textit{Proof: } For Definition \ref{SAT}(\ref{M*1}), let $\mathcal{M}=(W,\pi,V)$ be a $P$-model and $\alpha\in F_L$. $\mathcal{M}^i_\alpha$ is also a $P$-model. Assume now that $M^i_\alpha$ is well-defined (i.e., that some $w\in W$ is such that $\pi^i_\alpha(w)=0$) and $M^i_\alpha\vDash_{P*}\beta$. Consider now the triple $\mathcal M^i_\alpha|_0=(W_0,\pi_0,V_0)$ where $W_0=\{w\in W:\pi^i_\alpha(w)=0\}$, $\pi_0:W_0\to\mathbb N$ is such that $ \pi_0(w)=\pi^i_\alpha(w)$ and $V_0(p_i)=V(p_i)\cap W_0$. From the fact that $M^i_\alpha$ is well-defined, it is easy to check that $\mathcal M^i_\alpha|_0$ is a $P$-model (Definition \ref{DEF PMODEL}). Notice, also, that, by Definition \ref{RBSET}, $\gamma\in M^i_\alpha \iff \mathcal M^i_\alpha|_0\vDash\gamma$. Therefore, by the fact that $M^i_\alpha\vDash_{P*}\beta$, $\mathcal{M}^i_\alpha|_0\vDash \beta$. Finally, $\beta\in M^i_\alpha$. This proofs that $Cn_{P*}(M^i_\alpha)\subseteq M^i_\alpha$. The statement $M^i_\alpha\subseteq Cn_{P*}(M^i_\alpha)$ follows from the Tarskian nature of $Cn$.

        For Definition\ref{SAT}(\ref{M*6}), let $\mathcal{M}=(W,\pi,V)$ be a $P$-model and $\alpha, \beta\in F_L$ be such that $\alpha\equiv_{L}\beta$. Therefore, $\alpha\equiv_{P*}\beta$ (Definition \ref{PSAT} and Notation \ref{DEF1} imply that $P*$ preserves the consequence relation of $L$). This means that the worlds that satisfy $\alpha$ are the same as the ones that satisfy $\beta$ in $\mathcal{M}$. Hence, by Definition \ref{*}, $\pi^i_\alpha=\pi^i_\beta$. Therefore, $\mathcal{M}^i_\alpha=\mathcal{M}^i_\beta$ and $M^i_\alpha=M^i_\beta$. $\blacksquare$

    \begin{proposition}\label{PROP}
            For all $*^i$ functions presented, formula $\alpha\in F_L$ and $P$-model $\mathcal{M}$,  $M^i_\alpha$ (when it exists) is different from $F_{P*}$.
        \end{proposition}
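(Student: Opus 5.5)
The plan is to exhibit a single formula that never lies in $M^i_\alpha$. The natural candidate is $\bot$, say $p_0\land\neg p_0$.

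First, fix a $P$-model $\mathcal M=(W,\pi,V)$, a function $*^i$ and $\alpha\in F_L$, and assume $M^i_\alpha$ exists. Then $\mathcal M\nvDash\square^\mathbb N\neg\alpha$ (Definition \ref{RBSET}), and $0\in Im(\pi^i_\alpha)$ (Definition \ref{BSET}). I would also check that the second condition really follows from the first for every $*^i$. The hypothesis gives a world $w\in Plaus$ with $\mathcal M,w\vDash\alpha$. The set $\{\pi(w'):\mathcal M,w'\vDash\alpha\}$ is then nonempty, so the $\max$/$\min$ in Definition \ref{*} are meaningful. For $*^2$ in particular, $Dom(\overline\pi^2_\alpha)$ is nonempty. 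The $\max$ in clauses (\ref{*4}) and (\ref{*0}) also needs finiteness or boundedness of the degrees of $\alpha$-worlds; if that is not available, I would simply take the well-definedness of $M^i_\alpha$ as part of the existence hypothesis, exactly as the proof of Theorem \ref{T16} does.

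Next, since $Im(\overline\pi^i_\alpha)\subseteq\mathbb N$ is nonempty, its minimum is attained at some world. Normalization (\ref{normalization}) then gives a $w_0\in W$ with $\pi^i_\alpha(w_0)=0$. By clause 4 of Definition \ref{PSAT}, $\mathcal M^i_\alpha\vDash\square^0\bot$ would require $\mathcal M^i_\alpha,w_0\vDash\bot$. That is impossible, so $\bot\notin M^i_\alpha$ and $M^i_\alpha\neq F_{P*}$. Equivalently, one can reuse the restricted model $\mathcal M^i_\alpha|_0$ from the proof of Theorem \ref{T16}: it has a nonempty set of worlds, so it cannot validate $\bot$.

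The main obstacle is the well-definedness step, namely guaranteeing that $\pi^i_\alpha$ has nonempty domain and that all the $\max$/$\min$ expressions exist. I expect this to follow from the requirement $\mathcal M\nvDash\square^\mathbb N\neg\alpha$ together with the paper's standing convention that $M^i_\alpha$ exists only when it is well-defined. Once a degree-$0$ world exists, the rest is immediate.
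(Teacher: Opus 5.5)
Your proof is correct and matches the paper's argument: the existence condition $\mathcal M\nvDash\square^\mathbb N\neg\alpha$ gives $Dom(\overline\pi^i_\alpha)\neq\varnothing$, and well-ordering plus normalization then yields a world of $\pi^i_\alpha$-degree $0$, which cannot satisfy $\bot$, so $\bot\notin M^i_\alpha$ (you argue directly where the paper argues by contradiction, which makes no difference). You also point out that the $\max$ in the definitions of $\overline\pi^4_\alpha$ and $\overline\pi^0_\alpha$ needs the degrees of $\alpha$-worlds to be bounded; the paper passes over this silently, and treating it as part of the existence hypothesis is a reasonable way to handle it.
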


        \textit{Proof: }Take an arbitrary $\mathcal{M}=(W,\pi,V)$ such that there is a world of $\pi$-degree $0$ and an arbitrary $\alpha\in F_L$ such that some world in $Plaus$ validates $\alpha$.\footnote{Notice that, despite the assumptions, $(\mathcal M,\alpha)$ is an arbitrary pair that defines a belief set $M^i_\alpha$ (check Definitions \ref{BSET} and \ref{RBSET}).} From the last fact, $Dom(\overline{\pi}^i_\alpha)\neq \varnothing$ (by Definition \ref{*}). By the well-ordering principle for $\mathbb{N}$, there is a world $w$ such that $\overline\pi^i_\alpha(w)=\min\{\overline{\pi}^i_\alpha(w):w\in W\}$. By Definition \ref{*}(\ref{normalization}), $\pi^i_\alpha(w)=0$. Assume, for arriving at a contradiction, that $M^i_\alpha=F_{P*}$. Therefore $\mathcal{M}^i_\alpha\vDash \square^0\bot$. Since no world validates $\bot$, no world in $W$ has degree $0$ in $\pi^i_\alpha$. Absurd by the fact that $\pi^i_\alpha(w)=0$. $\blacksquare$

        This shows that, despite the lack of an equivalent of Postulate \ref{K*5}, the philosophical intuition behind it is preserved. Now, the main theorem of this section.

    \begin{theorem}
        Functions $*^2$,$*^4$, $*^0$ and $*^5$ satisfy Definition \ref{SAT}. 
    \end{theorem}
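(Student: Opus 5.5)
The plan is to reduce everything to a single structural lemma about the $0$-degree worlds after revision, and then read off the remaining clauses of Definition \ref{SAT}. By Theorem \ref{T16}, (\ref{M*1}) and (\ref{M*6}) already hold, so only (\ref{M*2}), (\ref{M*3}), (\ref{M*4}), (\ref{M*7}) and (\ref{M*8}) remain.

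Fix a $P$-model $\mathcal M=(W,\pi,V)$ and $\alpha\in F_L$ with $\mathcal M\nvDash\square^\mathbb N\neg\alpha$, so some $w\in Plaus$ validates $\alpha$. Write $\mathrm{Min}_\pi(\alpha)$ for the set of $\alpha$-worlds in $Plaus$ whose $\pi$-degree is minimal among $\alpha$-worlds in $Plaus$; it is nonempty by well-ordering. The key lemma is that for $i\in\{0,2,4,5\}$,
\[\{w\in W:\pi^i_\alpha(w)=0\}=\mathrm{Min}_\pi(\alpha).\]
I would check this separately for each function, using Definition \ref{*} and normalization.
\begin{itemize}
\item For $*^2$, the $\neg\alpha$-worlds are removed from the domain, so normalization subtracts $\min\{\pi(w'):\mathcal M,w'\vDash\alpha\}$.
\item For $*^4$ and $*^0$, every $\neg\alpha$-world receives a partial value strictly larger than $\max\{\pi(w'):\mathcal M,w'\vDash\alpha\}$, hence larger than every $\alpha$-value. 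So the global minimum is attained exactly at $\mathrm{Min}_\pi(\alpha)$.
\item For $*^5$, the $\alpha$-worlds receive $\pi(w)-\min_\alpha\ge 0$, with $0$ attained exactly on $\mathrm{Min}_\pi(\alpha)$, while the $\neg\alpha$-worlds receive $\pi(w)-\min_{\neg\alpha}+1\ge 1$. Normalization therefore changes nothing.
\end{itemize}

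I expect the main obstacle to be well-definedness of the maxima and minima in (\ref{*4}), (\ref{*0}) and (\ref{*5}). The first point is that $\max\{\pi(w'):\mathcal M,w'\vDash\alpha\}$ may fail to exist if $\pi$ is unbounded on the $\alpha$-worlds. The second is that $\min\{\pi(w'):\mathcal M,w'\vDash\neg\alpha\}$ is undefined if no $\neg\alpha$-world lies in $Plaus$, in which case the second clause is vacuous. I would handle these by restricting to the cases in which Definition \ref{*} is meaningful (e.g. the max exists), and by noting that the case split above still goes through when the second clause applies to no world.

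Next, I would use the fact that $\gamma\in M^i_\alpha$ iff $\mathcal M^i_\alpha,w\vDash\gamma$ for every $w$ with $\pi^i_\alpha(w)=0$, i.e.\ for every $w\in\mathrm{Min}_\pi(\alpha)$ by the lemma. Then (\ref{M*2}) is immediate, since $\alpha\in F_L$ holds on $\mathrm{Min}_\pi(\alpha)$. For (\ref{M*3}) and (\ref{M*4}), let $Z=\{w:\pi(w)=0,\ \mathcal M,w\vDash\alpha\}$.
\begin{itemize}
\item If $Z\neq\varnothing$, then $\mathrm{Min}_\pi(\alpha)=Z$, so "$\gamma$ holds in $\mathcal M^i_\alpha$ on $Z$" is literally equivalent to $\gamma\in M^i_\alpha$. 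This gives both (\ref{M*3}) and (\ref{M*4}).
\item If $Z=\varnothing$, the conclusion of (\ref{M*3}) is vacuously true. Moreover, every $0$-degree world of $\pi$ then satisfies $\neg\alpha$, so $\neg\alpha\in M$ and (\ref{M*4}) holds vacuously as well.
\end{itemize}

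For (\ref{M*7}) and (\ref{M*8}), let $S=\mathrm{Min}_\pi(\alpha)$, which is the $0$-set of $\pi^i_\alpha$ by the lemma.
\begin{itemize}
\item If $S$ contains a $\beta$-world, then the minimal $(\alpha\land\beta)$-worlds in $Plaus$ are exactly $S\cap\{w:\mathcal M,w\vDash\beta\}$. Hence $\alpha\land\beta$ is not known false, and the lemma applied to $\alpha\land\beta$ shows that the $0$-set of $\pi^i_{\alpha\land\beta}$ is $S\cap\{w:\mathcal M,w\vDash\beta\}$. Membership in $M^i_{\alpha\land\beta}$ is then exactly the condition stated in (\ref{M*7}) and (\ref{M*8}), giving both.
\item If $S$ contains no $\beta$-world, (\ref{M*7}) is vacuous. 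Also $\neg\beta\in M^i_\alpha$ because $\beta\in F_L$ is evaluated identically in $\mathcal M^i_\alpha$ (by the earlier proposition on $F_L$-formulas), so the hypothesis of (\ref{M*8}) fails.
\end{itemize}
This exhausts the clauses and would complete the proof.
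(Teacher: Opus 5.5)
Your proof is correct, but it is organized differently from the paper's.

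\textbf{How the paper proceeds.} The paper never isolates your characterization $\{w:\pi^i_\alpha(w)=0\}=\mathrm{Min}_\pi(\alpha)$. Instead it argues clause by clause:
\begin{itemize}
\item It proves success separately for each function.
\item It proves (\ref{M*7}) by showing that a $\beta$-world of $\pi^i_\alpha$-degree $0$ attains the minimum of $\overline\pi^i_{\alpha\land\beta}$. For $i\in\{0,2,4\}$ this uses a chain of inequalities on the partial functions; $*^5$ gets a separate computation.
\item It proves (\ref{M*8}) through the converse inclusion, using $\pi$-minimality.
\item It obtains (\ref{M*3}) and (\ref{M*4}) by instantiating $\alpha:=\top$.
\end{itemize}

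\textbf{What your route buys.} You reduce the function-specific work to one check per function. Every clause then becomes the set identity $\mathrm{Min}_\pi(\alpha\land\beta)=\mathrm{Min}_\pi(\alpha)\cap\{w:\mathcal M,w\vDash\beta\}$, valid when the right side is nonempty. This has several advantages.
\begin{itemize}
\item It shows that Definition \ref{SAT} only sees the $0$-layer. Any revision that computes this layer as $\mathrm{Min}_\pi(\alpha)$ qualifies, which explains the remark that it is surprising four quite different functions all pass.
\item The degenerate cases are handled explicitly rather than assumed away: $Z=\varnothing$, $S$ disjoint from the $\beta$-worlds, and well-definedness of $M^i_{\alpha\land\beta}$, which you derive instead of assuming.
\item You avoid the $\top$-instantiation. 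That step tacitly needs $\pi^i_\top=\pi$, which holds only when $0\in Im(\pi)$. This is harmless, since otherwise the clauses are vacuous.
\item You avoid the paper's blanket claim that $\overline\pi^i_\alpha(w)\le\overline\pi^i_{\alpha\land\beta}(w)$ for every world. That claim can fail on $\neg\alpha$-worlds under $*^0$ and $*^4$, because the maximum over $\alpha\land\beta$-worlds may be smaller than the maximum over $\alpha$-worlds. It is only needed at a minimizer, which is an $\alpha\land\beta$-world, where it holds with equality.
\end{itemize}

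\textbf{What the paper's route buys.} Keeping the two inclusions behind (\ref{M*7}) and (\ref{M*8}) apart makes visible which direction uses which property, and it is shorter for (\ref{M*3}) and (\ref{M*4}).

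Your caveat about the maxima in (\ref{*4}) and (\ref{*0}) possibly not existing on infinite models is legitimate; the paper ignores it.
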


    \textit{Proof: }This proof consists of the conjunction of several lemmas. Since  Definition \ref{SAT}(\ref{M*3},\ref{M*4}) are instances of Definition \ref{SAT}(\ref{M*7},\ref{M*8}) for $\alpha=\top$, it suffices to show that the functions satisfy the last two along with Definition \ref{SAT}(\ref{M*2}). Theorem \ref{T16} deals with the remaining items.

        \begin{lemma}
            $*^2$, $*^4$, $*^0$ and $*^5$ satisfy Definition \ref{SAT}(\ref{M*2}).
        \end{lemma}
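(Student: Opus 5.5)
Fix a $P$-model $\mathcal M=(W,\pi,V)$ with $0\in Im(\pi)$ and $\alpha\in F_L$ with $\mathcal M\nvDash\square^\mathbb N\neg\alpha$, so some $w\in Plaus$ has $\mathcal M,w\vDash\alpha$. By Definitions \ref{BSET} and \ref{RBSET}, $\alpha\in M^i_\alpha$ means $\mathcal M^i_\alpha\vDash\square^0\alpha$, i.e.\ every $w$ with $\pi^i_\alpha(w)=0$ satisfies $\alpha$ in $\mathcal M^i_\alpha$. Since $\alpha\in F_L$, the Proposition on $[*^i\beta]$ (first-order formulas are unaffected by revision) lets us evaluate $\alpha$ in $\mathcal M$ rather than $\mathcal M^i_\alpha$. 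Normalization subtracts the same constant from every value, so $\pi^i_\alpha(w)=0$ iff $\overline\pi^i_\alpha(w)$ is minimal. By the argument of Proposition \ref{PROP}, this minimum exists. The goal therefore becomes: \emph{every $\overline\pi^i_\alpha$-minimal world satisfies $\alpha$}. I would prove this by showing that each $\neg\alpha$-world in $Dom(\overline\pi^i_\alpha)$ has value strictly greater than some $\alpha$-world.

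The cases are as follows.

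\textbf{$*^2$:} $Dom(\overline\pi^2_\alpha)$ contains only $\alpha$-worlds, so the claim is immediate.

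\textbf{$*^4$:} each $\neg\alpha$-world gets $m+1$, where $m=\max\{\pi(w'):\mathcal M,w'\vDash\alpha\}$. An $\alpha$-world attaining $m$, or any $\alpha$-world (value $\le m$), lies strictly below this.

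\textbf{$*^0$:} a $\neg\alpha$-world gets $\pi(w)+m+1\ge m+1>m\ge\overline\pi^0_\alpha(w')$ for every $\alpha$-world $w'$ in $Plaus$.

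\textbf{$*^5$:} the $\alpha$-world achieving $\min\{\pi(w'):\mathcal M,w'\vDash\alpha\}$ receives $0$. Each $\neg\alpha$-world receives $\pi(w)-\min\{\pi(w'):\mathcal M,w'\vDash\neg\alpha\}+1\ge 1$. So the minimum is $0$ and is attained only by $\alpha$-worlds.

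The main obstacle is well-definedness of the max and min, which have to be read over $Plaus$ since $\pi$ is partial. Finiteness of the set in the max for $*^4$ and $*^0$ is needed, because otherwise the max may not exist in $\mathbb N$. I would handle this by noting that the definitions presuppose these values exist, i.e.\ they are taken over the $\alpha$-worlds of $Plaus$, and, if needed, assuming $Im(\pi)$ is bounded as implicitly required for Definition \ref{*} to make sense. Nonemptiness is supplied by the hypothesis $\mathcal M\nvDash\square^\mathbb N\neg\alpha$. For $*^5$, if there are no $\neg\alpha$-worlds in $Plaus$ the second clause is vacuous and the claim holds trivially.
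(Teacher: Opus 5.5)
Your proposal is correct and follows the paper's proof essentially step for step. Like the paper, you reduce to showing that every world of $\pi^i_\alpha$-degree $0$ (equivalently, of minimal $\overline\pi^i_\alpha$-value) satisfies $\alpha$, and your cases match it: $\overline\pi^2_\alpha$ is defined only on $\alpha$-worlds, $\neg\alpha$-worlds exceed the maximal $\alpha$-value under $*^4$ and $*^0$, and under $*^5$ the least $\alpha$-world gets $0$ while every $\neg\alpha$-world gets at least $1$. The paper also leaves implicit your presupposition that the maximum in the $*^4$/$*^0$ clauses exists, and the extra hypothesis $0\in Im(\pi)$ you impose is never used.
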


        \textit{Proof: }For $*^2$, let $\alpha \in F_L$ and $\mathcal{M}=(W,\pi, V)$ be both arbitrary given the proper restrictions for the existence of $M^2_\alpha$. Proving that $\mathcal{M}^2_\alpha\vDash\square^0\alpha$ is enough to show that $\alpha\in M^2_\alpha$. Take an arbitrary world $w$ such that $\pi^2_\alpha(w)=0$. Since normalization does not change the normalized function's domain, $w\in Dom(\overline{\pi}^2_\alpha)$. As a result, by Definition \ref{*}(\ref{*2}) $\mathcal{M},w\vDash \alpha$. But $\alpha\in F_L$, so changes on $\pi$ will not change the worlds that satisfy it. Therefore, $\mathcal{M}^2_\alpha, w\vDash \alpha$. Since $w$ was an arbitrary world of $\pi^2_\alpha$-degree $0$,  $\mathcal{M}^2_\alpha\vDash\square^0\alpha$.

            For $*^4$, in a similar way, consider $\alpha$, $\mathcal{M}$ and $w$ such that $\pi^4_\alpha(w)=0$. I will show that $\mathcal{M}, w\vDash \alpha$ and, as in the paragraph above, it proves the lemma for $*^4$. By the fact that $\pi^4_\alpha(w)=0$, $\overline{\pi}^4_\alpha(w)=\min\{\overline{\pi}^4_\alpha(w): w\in W\}$. Assume, now, for the sake of implying a contradiction, that $\mathcal M,w\nvDash\alpha$. By Definition \ref{*}(\ref{*4}),  $\overline{\pi}^4_\alpha(w)>\max\{\pi(w):\mathcal M,w\vDash\alpha\}$. But since $M^4_\alpha$ is well-defined $\{\pi(w):\mathcal M,w\vDash\alpha\}\neq\varnothing$. Therefore, $\overline{\pi}^4_\alpha(w)>\overline{\pi}^4_\alpha(w')$ for some $w'\in W$, which contradicts the previously stated fact that $\overline{\pi}^4_\alpha(w)=\min\{\overline{\pi}^4_\alpha(w): w\in W\}$. From this, $\mathcal M,w\vDash\alpha$. The argument for $*^0$ is omitted for being too similar to the one for $*^4$ (one can simply replace `$4$' for `$0$' in this paragraph).
            
            For $*^5$ the proof is done by contrapositive. Let $\alpha\in F_L$ and $\mathcal M=(W,\pi,V)$ be such that $\mathcal M\nvDash \square^\mathbb N \neg\alpha$ and take any world $w$ such that $\mathcal{M}^5_\alpha,w\nvDash\alpha$, or, equivalently, $\mathcal{M},w\nvDash\alpha$. It suffices to show that $\pi^5_\alpha(w)\neq 0$. Since $\mathcal{M},w\nvDash\alpha$, $\overline\pi^5_\alpha(w)\geq 1$ by Definition \ref{*}(\ref{*5}). But since $\mathcal M\nvDash \square^\mathbb N \neg\alpha$, there is a world in $Plaus$ that validates it. Let $w_{min}$ be a world with minimum $\pi$-degree which validates $\alpha$ (it exists by the well-order of $\mathbb N$). Definition \ref{*}(\ref{*5}) states that $\overline\pi^5_\alpha(w_{min})=0$. Therefore, $\overline\pi^5_\alpha(w_{min})<\overline\pi^5_\alpha(w)$ and $\overline\pi^5_\alpha(w)\neq \min\{\overline\pi^5_\alpha(w'):w'\in W\}$. Definition \ref{*}(\ref{normalization}) provides the intended result that $\pi^5_\alpha(w)\neq 0$. $\blacksquare$

        \begin{lemma}\label{L7}
            $*^2$, $*^4$, $*^0$ and $*^5$ satisfy Definition \ref{SAT}(\ref{M*7}).
        \end{lemma}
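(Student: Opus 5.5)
The plan is to reduce Definition \ref{SAT}(\ref{M*7}) to a single structural fact about the zero-degree worlds of the revised models. Fix $\mathcal M=(W,\pi,V)$, $\alpha,\beta\in F_L$ and $\gamma\in F_{P*}$ with $\gamma\in M^i_{\alpha\land\beta}$, for $i\in\{2,4,0,5\}$. By Definitions \ref{BSET}-\ref{RBSET}, the hypothesis means $\mathcal M^i_{\alpha\land\beta}\vDash\square^0\gamma$, that is, $\mathcal M^i_{\alpha\land\beta},w\vDash\gamma$ for every $w$ with $\pi^i_{\alpha\land\beta}(w)=0$. So it suffices to prove the following inclusion: every $w$ with $\pi^i_\alpha(w)=0$ and $\mathcal M,w\vDash\beta$ satisfies $\pi^i_{\alpha\land\beta}(w)=0$. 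If no such $w$ exists, the claim is vacuous. If one exists, then $w\in Plaus$ and $\mathcal M,w\vDash\alpha\land\beta$, so $\mathcal M\nvDash\square^\mathbb N\neg(\alpha\land\beta)$ and $M^i_{\alpha\land\beta}$ is well defined.

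The key auxiliary claim, to be proved separately for each $i\in\{2,4,0,5\}$ and for any $\delta\in F_L$ with $\mathcal M\nvDash\square^\mathbb N\neg\delta$, is
\[\pi^i_\delta(w)=0\iff \mathcal M,w\vDash\delta,\ w\in Plaus\text{ and }\pi(w)=\min\{\pi(w'):\mathcal M,w'\vDash\delta,\ w'\in Plaus\}.\]
The verification goes case by case from Definition \ref{*}:
\begin{itemize}
\item For $*^2$, the domain of $\overline\pi^2_\delta$ is exactly the $\delta$-worlds of $Plaus$, the values there are unchanged, and normalization subtracts their minimum.
\item For $*^4$ and $*^0$, the $\delta$-worlds keep their $\pi$-values, while each $\neg\delta$-world gets a value strictly greater than $\max\{\pi(w'):\mathcal M,w'\vDash\delta\}$. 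Hence the global minimum of $\overline\pi^i_\delta$ is attained only at minimal $\delta$-worlds, and it equals $\min\{\pi(w'):\mathcal M,w'\vDash\delta\}$.
\item For $*^5$, the $\delta$-worlds get $\pi(w)-\min\{\pi(w'):\mathcal M,w'\vDash\delta\}\geq 0$, with equality exactly at the minimal ones, while $\neg\delta$-worlds get values $\geq 1$. This is the argument already used in the lemma for Definition \ref{SAT}(\ref{M*2}). So the minimum is $0$ and normalization changes nothing.
\end{itemize}
The direction ``$\Rightarrow$'' essentially repeats the proof of the previous lemma, and ``$\Leftarrow$'' follows from the same computation.

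With the auxiliary claim in hand, take $w$ with $\pi^i_\alpha(w)=0$ and $\mathcal M,w\vDash\beta$. Applying the claim with $\delta=\alpha$ gives three facts: $\mathcal M,w\vDash\alpha$, $w\in Plaus$, and $\pi(w)$ is minimal among plausible $\alpha$-worlds. Every plausible $\alpha\land\beta$-world is a plausible $\alpha$-world, and $w$ is itself one. Hence $\pi(w)=\min\{\pi(w'):\mathcal M,w'\vDash\alpha\land\beta,\ w'\in Plaus\}$. Applying the claim with $\delta=\alpha\land\beta$ then yields $\pi^i_{\alpha\land\beta}(w)=0$, and the first paragraph finishes the proof.

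The main obstacle is the auxiliary claim, particularly for $*^5$ and $*^0$, where the $\neg\delta$-worlds receive shifted values. There one must check carefully that no $\neg\delta$-world can tie with the minimum after normalization. One must also handle worlds outside $Plaus$, which lie outside every domain and so never reach degree $0$. For $*^0$, the value assigned to a $\neg\delta$-world is at least $\max\{\pi(w'):\mathcal M,w'\vDash\delta\}+1$, which settles this; for $*^5$ the bound $\geq 1$ versus $0$ does.
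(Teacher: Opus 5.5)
Your proof is correct, but for $*^2$, $*^4$ and $*^0$ it takes a different route from the paper. For those three functions the paper fixes a minimizer $w'$ of $\overline\pi^i_{\alpha\land\beta}$ and proves the chain $\overline\pi^i_{\alpha\land\beta}(w)=\overline\pi^i_\alpha(w)\leq\overline\pi^i_\alpha(w')\leq\overline\pi^i_{\alpha\land\beta}(w')$, which forces equality with the minimum. Only in the $*^5$ case does the paper argue as you do: it uses ``$\pi^5_\alpha(w)=0$ iff $w$ is a $\pi$-minimal $\alpha$-world'', plus the observation that such a world satisfying $\beta$ is $\pi$-minimal among the $\alpha\land\beta$-worlds. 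You turn that characterization into a lemma, check it once per function, and then run the $*^5$ argument uniformly. This gives uniformity and reusability. Indeed, the paper's proof of Lemma \ref{L8} uses both directions of your characterization without isolating it. It also sidesteps a faulty justification in the paper. The paper derives the last inequality of its chain from the pointwise claim $\overline\pi^i_\alpha\leq\overline\pi^i_{\alpha\land\beta}$, and that claim is false for $*^4$ and $*^0$. Under $\overline\pi^4_{\alpha\land\beta}$ a $\neg\alpha$-world receives $\max\{\pi(u):\mathcal M,u\vDash\alpha\land\beta\}+1$, but under $\overline\pi^4_\alpha$ it receives $\max\{\pi(u):\mathcal M,u\vDash\alpha\}+1$, and the former can be strictly smaller. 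The same comparison, shifted by the world's own $\pi$-value, applies to $*^0$. The paper's step can be repaired: success for $\alpha\land\beta$ gives $\mathcal M,w'\vDash\alpha\land\beta$, hence $\overline\pi^i_\alpha(w')=\pi(w')=\overline\pi^i_{\alpha\land\beta}(w')$. In that repaired form it needs only success and the fact that worlds satisfying the revised formula keep their $\pi$-values in the partial step, whereas you compute the minimum of $\overline\pi^i_\delta$ explicitly for each function. One small tidy-up: in your first paragraph you infer $\mathcal M,w\vDash\alpha$ before establishing $\mathcal M\nvDash\square^\mathbb N\neg\alpha$, which your claim for $\delta=\alpha$ requires. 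Take this instead from the hypothesis $\gamma\in M^i_{\alpha\land\beta}$, which presupposes $\mathcal M\nvDash\square^\mathbb N\neg(\alpha\land\beta)$ and hence $\mathcal M\nvDash\square^\mathbb N\neg\alpha$, as the paper does in its $*^5$ case.
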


        \textit{Proof: } I'll begin by prooving for $*^2$, $*^4$ and $*^0$. Take an arbitrary model $\mathcal{M}=(W,\pi,V)$ along with arbitrary formulas $\alpha, \beta\in F_L$ and $\gamma\in F_{P*}$. Assume that, for $i\in\{0,2,4\}$, $M^i_{\alpha\land\beta}$ is well-defined and $\mathcal{M}^i_{\alpha\land\beta}\vDash\square^0\gamma$ (i.e., $\gamma\in M^i_{\alpha\land\beta}$). Consider, now, a world $w\in W$ such that $\pi^i_\alpha (w)=0$ and $\mathcal{M},w\vDash\beta$. I will show that $\mathcal{M}^i_{\alpha\land\beta}, w\vDash\gamma$. 

        We know, from the fact that $M^i_{\alpha\land\beta }$ is well-defined, that some $w\in Plaus$ validates $\alpha\land\beta$. Therefore, $Dom(\overline\pi^i_{\alpha\land\beta})\neq\varnothing$ and there is a $w'\in W$ such that $\overline\pi^i_{\alpha\land\beta}(w')=\min\{\overline\pi^i_{\alpha\land\beta}(w'') : w''\in W\}$. Notice that $\pi^i_{\alpha\land\beta}(w')=0$, which means that proving $\overline\pi^i_{\alpha\land\beta}(w')= \overline \pi^i_{\alpha\land\beta}(w)$ suffices to show that $\pi^i_{\alpha\land\beta}(w)=0$. Since $*^0$, $*^2$ and $*^4$ are all successful and $\pi^i_\alpha (w)=0$, $\mathcal M,w\vDash\alpha$. Consequently, $\mathcal M,w\vDash\alpha\land\beta$, which guarantees the existence of $\overline \pi^i_{\alpha\land\beta}(w)$. From $\overline\pi^i_{\alpha\land\beta}(w')=\min\{\overline\pi^i_{\alpha\land\beta}(w'') : w''\in W\}$, it follows that $\overline\pi^i_{\alpha\land\beta}(w')\leq \overline \pi^i_{\alpha\land\beta}(w)$. From $\mathcal M,w\vDash\alpha$ and $\mathcal M,w\vDash\alpha\land\beta$, $\overline \pi^i_{\alpha\land\beta}(w)=\pi(w)=\overline \pi^i_{\alpha}(w)$ (By Definition \ref{*}(\ref{*2},\ref{*4},\ref{*0})). Also, since $\pi^i_\alpha (w)=0$, $\overline\pi^i_\alpha (w)=\min\{\overline\pi^i_\alpha (w''):w''\in W\}$. Therefore, $\overline\pi^i_\alpha (w)\leq \overline\pi^i_\alpha(w')$.  By Definition \ref{*}(\ref{*2},\ref{*4},\ref{*0}), each world was a $\overline{\pi}^i_\alpha$-degree less or equal to its $\overline{\pi}^i_{\alpha\land\beta}$-degree (the revision by $\alpha\land\beta$ increases the degree of more worlds). Hence, $\overline\pi^i_\alpha(w')\leq \overline\pi^i_{\alpha\land\beta}(w')$. Putting everything together, $\overline \pi^i_{\alpha\land\beta}(w)=\overline \pi^i_{\alpha}(w)\leq \overline\pi^i_\alpha(w')\leq \overline\pi^i_{\alpha\land\beta}(w')$. This shows that $\overline\pi^i_{\alpha\land\beta}(w')= \overline \pi^i_{\alpha\land\beta}(w)$. Hence, $\pi^i_{\alpha\land\beta}(w)=0$. Finally, the hypothesis that $\mathcal{M}^i_{\alpha\land\beta}\vDash\square^0\gamma$ implies $\mathcal{M}^i_{\alpha\land\beta}, w\vDash\gamma$.

            $*^5$ demands its own proof. Let $\alpha,\beta\in F_L$ and $\mathcal M=(W,\pi,V)$ be such that $M^i_{\alpha\land\beta}$ is well-defined. Take an arbitrary $w\in W$ such that $\mathcal M,w\vDash \beta$ and $\pi^5_\alpha(w)=0$. Now, I'll to show that $\mathcal M^5_{\alpha\land\beta},w\vDash \gamma $ for any $\gamma\in F_{P*}$ such that $\mathcal M^5_{\alpha\land\beta}\vDash\square^0\gamma$. This will be done by showing that $\pi^5_{\alpha\land\beta}(w)=0$. Since $\pi^5_\alpha(w)=0$, $\overline\pi^5_\alpha(w)=\min\{\overline\pi^5_\alpha(w'):w'\in W\}$. But since $\mathcal M \nvDash\square^\mathbb N\neg(\alpha\land\beta)$,\footnote{$M^i_{\alpha\land\beta}$ is well-defined.}  $\mathcal M \nvDash\square^\mathbb N\neg\alpha$. From this it follows, by Definition \ref{*}(\ref{*5}), that $\min\{\overline\pi^5_\alpha(w'):w'\in W\}=0$ (the $\overline\pi^5_\alpha$-value of the world with $\pi$-minimum degree among those that validate $\alpha$ must be 0). Therefore, $\overline\pi^5_{\alpha}(w)=0$ and $\pi(w)=\min\{\pi(w'):\mathcal M,w\vDash\alpha\}$. But, by the hypothesis that $\mathcal M,w\vDash \beta$, it follows that $\pi(w)=\min\{\pi(w'):\mathcal M,w\vDash\alpha\land\beta\}$. Finally, by Definition \ref{*}(\ref{*5}), $\pi^5_{\alpha\land\beta}(w)=0$. $\blacksquare$

        \begin{lemma}\label{L8}
            $*^2$, $*^4$, $*^0$ and $*^5$ satisfy Postulate \ref{M*8}.
        \end{lemma}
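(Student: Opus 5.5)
The plan is to reduce the postulate to a statement about worlds of degree $0$. Fix $i\in\{0,2,4,5\}$, a model $\mathcal M=(W,\pi,V)$, formulas $\alpha,\beta\in F_L$ and $\gamma\in F_{P*}$. Assume that $M^i_{\alpha\land\beta}$ is well-defined, that $\neg\beta\notin M^i_\alpha$, and that every $w$ with $\pi^i_\alpha(w)=0$ and $\mathcal M,w\vDash\beta$ satisfies $\mathcal M^i_{\alpha\land\beta},w\vDash\gamma$. It then suffices to show the following: every $w$ with $\pi^i_{\alpha\land\beta}(w)=0$ satisfies both $\pi^i_\alpha(w)=0$ and $\mathcal M,w\vDash\beta$. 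Given this, $\gamma$ holds at every $\pi^i_{\alpha\land\beta}$-degree-$0$ world, so $\mathcal M^i_{\alpha\land\beta}\vDash\square^0\gamma$, that is, $\gamma\in M^i_{\alpha\land\beta}$.

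The key auxiliary claim characterises degree-$0$ worlds. For each of these four functions and any $\delta\in F_L$ with $\mathcal M\nvDash\square^\mathbb N\neg\delta$, I would show
\[
\pi^i_\delta(w)=0 \iff \mathcal M,w\vDash\delta \text{ and } \pi(w)=\min\{\pi(w'):\mathcal M,w'\vDash\delta\}.
\]
The cases are as follows.
\begin{itemize}
\item For $*^2$, the domain of $\overline\pi^2_\delta$ consists exactly of the $\delta$-worlds in $Plaus$, and on them $\overline\pi^2_\delta=\pi$.
\item For $*^4$ and $*^0$, every $\neg\delta$-world gets a $\overline\pi$-value strictly larger than $\max\{\pi(w'):\mathcal M,w'\vDash\delta\}$, while $\delta$-worlds keep their $\pi$-value. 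Hence the overall minimum is attained only at $\pi$-minimal $\delta$-worlds. This is the same reasoning as in the success lemma for $*^4$.
\item For $*^5$, $\delta$-worlds receive $\pi(w)-\min\{\pi(w'):\mathcal M,w'\vDash\delta\}\geq 0$, with value $0$ exactly at the minimal ones. Every $\neg\delta$-world receives a value $\geq 1$, as in the success proof for $*^5$.
\end{itemize}
Since normalization subtracts the minimum, the claim follows. The set of $\delta$-worlds in $Plaus$ is nonempty by the well-definedness assumption, and the minima exist by well-ordering of $\mathbb N$.

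Now I would apply the claim to both revisions. From $\neg\beta\notin M^i_\alpha$ there is some $u$ with $\pi^i_\alpha(u)=0$ and $\mathcal M,u\vDash\beta$. By the claim (or by success, Lemma on Definition~\ref{SAT}(\ref{M*2})), $\mathcal M,u\vDash\alpha\land\beta$ and $\pi(u)=m_\alpha:=\min\{\pi(w'):\mathcal M,w'\vDash\alpha\}$. Every $\alpha\land\beta$-world is an $\alpha$-world, so $m_{\alpha\land\beta}\geq m_\alpha$. On the other hand, $u$ witnesses $m_{\alpha\land\beta}\leq m_\alpha$, so the two minima coincide. Note that well-definedness of $M^i_\alpha$ follows from that of $M^i_{\alpha\land\beta}$. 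Now take $w$ with $\pi^i_{\alpha\land\beta}(w)=0$. By the claim, $\mathcal M,w\vDash\alpha\land\beta$ and $\pi(w)=m_{\alpha\land\beta}=m_\alpha$. Applying the claim again, this time for $\alpha$, gives $\pi^i_\alpha(w)=0$, and $\mathcal M,w\vDash\beta$ holds directly, which is what was required.

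I expect the main obstacle to be establishing the characterization claim uniformly, especially for $*^5$ and for $*^2$, where the domain shrinks. The other delicate point is making sure the hypotheses needed to apply it ($\mathcal M\nvDash\square^\mathbb N\neg\alpha$ and $\mathcal M\nvDash\square^\mathbb N\neg(\alpha\land\beta)$) are available. Once the claim is in place, the rest is the short comparison of minima above.
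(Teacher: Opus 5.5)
Your proof is correct and follows the paper's argument. The paper likewise reduces the postulate to showing that every world of $\pi^i_{\alpha\land\beta}$-degree $0$ validates $\beta$ and has $\pi^i_\alpha$-degree $0$, and it gets this by comparing the $\pi$-minima over $\alpha$-worlds and over $\alpha\land\beta$-worlds using a witness obtained from $\neg\beta\notin M^i_\alpha$. Your biconditional characterisation of degree-$0$ worlds packages the facts the paper checks case by case, and it makes explicit the converse direction the paper uses tacitly when it infers $\pi^i_\alpha(w)=\pi^i_\alpha(w')=0$ from $\pi(w)=\pi(w')$ together with both worlds validating $\alpha$.
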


        \textit{Proof: } For each $*^i\in \{*^2,*^4,*^0,*^5\}$, assume that $\mathcal M^i_\alpha\nvDash\square^0\neg\beta$ and both $M^i_\alpha$ and $M^i_{\alpha\land\beta}$ are well-defined. I'll begin with a proof that all worlds in $W$ with $\pi^i_{\alpha\land\beta}$-degree equal to $0$ must both (i) validate $\beta$ and (ii) have $\pi^i_\alpha$-degree 0.
        
        Consider an arbitrary $w\in W$ such that $\pi^i_{\alpha\land\beta}(w)=0$. By the postulate of success, valid for each of the functions, $\mathcal{M},w\vDash\alpha\land\beta$. Therefore, (i) is proved. Since $\pi^i_{\alpha\land\beta}(w)=0$, $\overline{\pi}^i_{\alpha\land\beta}(w)= \min\{\overline{\pi}^i_{\alpha\land\beta}(w'):w'\in W\}$. By the postulate of success, $\min\{\overline{\pi}^i_{\alpha\land\beta}(w'):w'\in W\}=\min\{\overline{\pi}^i_{\alpha\land\beta}(w'):\mathcal M,w'\vDash \alpha\land\beta\}$. Furthermore, $\pi(w)= \min\{\pi(w'):\mathcal{M},w'\vDash\alpha\land\beta\}$. In the case when $i\in\{0,2,4\}$,  this is true because, for any $w_1\in W$, $\mathcal{M},w_1\vDash\alpha\land\beta \implies\overline{\pi}^i_{\alpha\land\beta}(w_1)=\pi(w_1)$, but, as I have pointed out, $\overline{\pi}^i_{\alpha\land\beta}(w)= \min\{\overline{\pi}^i_{\alpha\land\beta}(w'):\mathcal M,w'\vDash \alpha\land\beta\}$. In the case of $i=5$, we can easily check in Definition \ref{*}(\ref{*5}) that $\pi(w)=\min\{\pi(w'):\mathcal M,w'\vDash \alpha\land\beta\}$ when $\pi^5_{\alpha\land\beta}(w)=0$.
            
            From $\mathcal M^i_\alpha\nvDash\square^0\neg\beta$ we know there is a $w'$ such that $\pi^i_\alpha(w')=0$ and $w'\vDash \beta$ (and, by the postulate of success, $\mathcal{M}, w'\vDash\alpha\land\beta$). From the fact that $\pi^i_\alpha(w')=0$, we get that $\pi(w')= \min\{\pi(w'):\mathcal{M}, w'\vDash \alpha\}$. Therefore, \[\pi(w')=\min\{\pi(w'):\mathcal{M}, w'\vDash \alpha\}\leq \min\{\pi(w):\mathcal{M},w'\vDash\alpha\land\beta\} = \pi(w)\]On the other hand, $\mathcal{M}, w'\vDash \alpha\land\beta$, so \[\pi(w)= \min\{\pi(w):\mathcal{M},w'\vDash\alpha\land\beta\}\leq\pi(w')\]
            
            Antisymmetry states that $\pi(w)=\pi(w')$ and, therefore, (since both validate $\alpha$) $\pi^i_\alpha(w)=\pi^i_\alpha(w')=0$. This proves that, for  all $w\in W$, $\pi^i_{\alpha\land\beta}(w)=0\implies \pi^i_\alpha(w)=0\text{ and } \mathcal M,w\vDash\beta$. 
            
            The last part of the proof begins with the assumption that, for an arbitrary $\gamma$, all $w\in W$ are such that $\pi^i_\alpha(w)=0$ and $ \mathcal{M},w\vDash\beta$ imply $\mathcal{M}^i_{\alpha\land\beta},w\vDash\gamma$. Consider now a world $w$ with $\pi^i_{\alpha\land\beta}(w)=0$. The paragraphs above show that $\pi^i_\alpha(w)=0$ and $\mathcal{M},w\vDash\beta$. Therefore, $\mathcal{M}^i_{\alpha\land\beta},w\vDash\gamma$. Hence, $\mathcal M^i_{\alpha\land\beta}\vDash\square^0\gamma$ and $\gamma\in M^i_{\alpha\land\beta}$. This concludes the proof. $\blacksquare$

    This was the last lemma, which means the theorem is proved. $\blacksquare$

It is possible, now, to consider all four of those functions to be AGM-like belief revision functions, at least in a certain sense. This is quite surprising, as they're all considerably different from each other. This is because an AGM-like function of $P*$ may not yet be a good representation of AGM belief revisions. Since $P*$ is more informative than the basic AGM theory, the AGM postulates are not enough to capture the hole intuition behind AGM revisions \textit{for this specific system}. This is not a defect of the postulates' adaptations. They do, in fact, capture the same intuitions behind the original ones, but this is not enough due to the gain in expressiveness. In this direction, the next section draws distinctions between $*^2$, $*^4$, $*^5$ and $*^0$. In the end, I argue that $*^0$, is (among those) the one that best captures the notion AGM-like belief revision functions intend to express.

\section{$*^0$, $*^2$, $*^4$ and $*^5$ Beyond AGM Postulates}\label{SAGMP*+}

    I begin by showing a result related to $*^2$ that is quite undesirable when the function is seen as an AGM-like belief revision function.

    \begin{proposition}\label{Paradox *2}
        For each $P$-model $\mathcal{M}$ and $\alpha\in F_L$, there is no $(M^2_\alpha)^2_{\neg\alpha}$
    \end{proposition}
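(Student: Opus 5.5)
The plan is to unfold what it would take for $(M^2_\alpha)^2_{\neg\alpha}$ to exist and show that the required condition always fails. By Definition \ref{RBSET}, the iterated revised belief set $(M^2_\alpha)^2_{\neg\alpha}$ is the revised belief set of the model $\mathcal{M}^2_\alpha=(W,\pi^2_\alpha,V)$ by $\neg\alpha$. It therefore exists only if $\mathcal{M}^2_\alpha\nvDash\square^\mathbb{N}\neg\neg\alpha$, i.e., only if some world in $Dom(\pi^2_\alpha)$ validates $\neg\alpha$. Note also that $\neg\alpha\in F_L$ because $\alpha\in F_L$, so the restriction on the revised formula is met. Moreover, $M^2_\alpha$ itself must exist, which requires $0\in Im(\pi^2_\alpha)$; this is guaranteed by the existence conditions for $M^2_\alpha$ together with the normalization argument in Proposition \ref{PROP}. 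So the whole proof reduces to showing $\mathcal{M}^2_\alpha\vDash\square^\mathbb{N}\alpha$.

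The key step is to compute the new set of plausible worlds. By Definition \ref{*}(\ref{*2}), $Dom(\overline{\pi}^2_\alpha)=\{w\in Plaus:\mathcal{M},w\vDash\alpha\}$. Normalization (Definition \ref{*}(\ref{normalization})) only shifts values and does not change the domain, so $Dom(\pi^2_\alpha)$ is the same set. Hence every world in the plausible set of $\mathcal{M}^2_\alpha$ satisfies $\alpha$ in $\mathcal{M}$.

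Since $\alpha\in F_L$, its truth depends only on $W$ and $V$, as in the earlier Proposition. Therefore $\mathcal{M}^2_\alpha,w\vDash\alpha$ for every $w$ in that domain. By clause 5 of Definition \ref{PSAT} this gives $\mathcal{M}^2_\alpha\vDash\square^\mathbb{N}\alpha$, equivalently $\mathcal{M}^2_\alpha\vDash\square^\mathbb{N}\neg\neg\alpha$ (using that $\neg\neg\alpha\equiv_L\alpha$). The restriction in Definition \ref{RBSET} then fails for the pair $(\mathcal{M}^2_\alpha,\neg\alpha)$, so $(M^2_\alpha)^2_{\neg\alpha}$ is undefined.

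I expect no real obstacle here. The only delicate point is bookkeeping about which model the existence condition in Definition \ref{RBSET} is evaluated on: it must be $\mathcal{M}^2_\alpha$, not $\mathcal{M}$. One must also make sure that the case where $M^2_\alpha$ itself fails to exist is covered, where the claim holds trivially.
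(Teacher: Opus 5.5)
Your proposal is correct and follows the paper's proof. Both arguments observe that $Dom(\pi^2_\alpha)$ contains only $\alpha$-worlds, because normalization preserves the domain of $\overline{\pi}^2_\alpha$, conclude $\mathcal{M}^2_\alpha\vDash\square^\mathbb{N}\neg\neg\alpha$, and then invoke the restriction in Definition \ref{RBSET}. Your extra remarks about $0\in Im(\pi^2_\alpha)$ and about the degenerate case where $M^2_\alpha$ itself does not exist are harmless bookkeeping that the paper leaves implicit.
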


    \textit{Proof: }Consider a model $\mathcal{M}^2_\alpha$ for arbitrary $\mathcal{M}=(W,\pi,V)$ and $\alpha$. For each $w\in Dom(\pi^2_\alpha)$, $\mathcal{M}, w\vDash \alpha$ since, otherwise, by Definition \ref{*}(\ref{*2}), $w\notin Dom(\overline{\pi}^2_\alpha)=Dom(\pi^2_\alpha)$. Therefore, $\mathcal{M}^2_\alpha\vDash \square^\mathbb{N}\alpha$, i. e. $\mathcal{M}^2_\alpha\vDash \square^\mathbb{N}\neg \neg \alpha$. By Definition \ref{RBSET}, revisions by knowingly false formulas do not induce a revised belief set. $\blacksquare$

    Notice that, even if we got rid of the restriction in Definition \ref{RBSET}, $*^2$ would give us strange results related to $(\mathcal{M}^2_\alpha)^2_{\neg\alpha}$. $Dom((\pi^2_\alpha)^2_{\neg\alpha})=\varnothing$ and, therefore $(\mathcal{M}^2_\alpha)^2_{\neg\alpha}\vDash \square^\mathbb{N}\bot$. This should not be the case for an AGM belief revision function. Ideally, those functions formalize agents embedded in a dynamic setting with evidence and information that may change over time. It is perfectly rational, for those agents, to accept a formula and, after changes in the state of affairs, accept its negation. The intended way to formalize this process within AGM would be two consecutive revisions, one by a formula and another by its negation. Unfortunately, $*^2$ states that such agents would be irrational beasts who believe in absurdities. This result is not unexpected when one thinks about the relation between the philosophical interpretation of the AGM theory and of $*^2$. As said in the Introduction, AGM revisions assume that the agent is holding beliefs about a changing environment, but $*^2$ was interpreted, in Section \ref{SP*}, as the process of comprehending a formal proof of a formula, and mathematical objects are static by nature. $*^2$ represents the acquisition of certain and eternal knowledge. Receiving a proof $\alpha$ and then one of $\neg\alpha$ should, indeed, lead the agent to absurdity, since it does not mean that the situation related to $\alpha$ changed, but that there is a true contradiction. 

   Here we have another paradox, now related to $*^4$. 

    \begin{proposition}
        Given two arbitrary $P$-models $\mathcal{M}=(W,\pi,V)$ and $\mathcal{M'}=(W',\pi',V')$ such that $W=W'$, $V=V'$ and $Plaus=Plaus'$ along with an $\alpha\in F_L$\footnote{$M'=M_{\mathcal M'}$.}
        \[(M^4_\alpha)^4_{\neg\alpha}=(M')^4_\alpha)^4_{\neg\alpha}\]
    \end{proposition}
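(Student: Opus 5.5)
The plan is to show more than the statement asks for. I will show that the preference function of the twice-revised model, $(\pi^4_\alpha)^4_{\neg\alpha}$, can be computed from $W$, $V$ and $Plaus$ alone. The equality of belief sets then follows at once from Definitions \ref{BSET} and \ref{RBSET}, since $(M^4_\alpha)^4_{\neg\alpha}$ is the belief set of $(\mathcal M^4_\alpha)^4_{\neg\alpha}$.

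First I settle the definedness issue, because it also depends only on $Plaus$. $M^4_\alpha$ exists iff some world of $Plaus$ validates $\alpha$. In that case $Dom(\pi^4_\alpha)=Plaus$, because $*^4$ only reassigns values on $Plaus$; this is the reading of Definition \ref{*}(\ref{*4}) on which $\overline\pi^4_\alpha$ is defined exactly where $\pi$ is. Hence the second revision is defined iff some world of $Plaus$ validates $\neg\alpha$. By Proposition \ref{PROP}-style reasoning, $0\in Im(\pi^4_\alpha)$, so the belief set of $\mathcal M^4_\alpha$ is well defined. Both conditions have identical truth values for $\mathcal M$ and $\mathcal M'$, so the two sides are either both undefined or both defined. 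If the statement's equality is read as covering only the defined case, I restrict to that case.

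Next I compute. Let $m=\min\{\pi(w):w\in Plaus,\ \mathcal M,w\vDash\alpha\}$ and $K=\max\{\pi(w):w\in Plaus,\ \mathcal M,w\vDash\alpha\}$. After the partial step and normalization, $\pi^4_\alpha(w)=\pi(w)-m$ for $\alpha$-worlds of $Plaus$, and $\pi^4_\alpha(w)=K+1-m$ for every $\neg\alpha$-world of $Plaus$. The key point is that all $\neg\alpha$-worlds receive one common value $c=K+1-m$, and that $c$ exceeds the value of every $\alpha$-world.

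Now revise by $\neg\alpha$ under $*^4$. The $\neg\alpha$-worlds keep the value $c$. Each $\alpha$-world gets $\max\{\pi^4_\alpha(w'):w'\vDash\neg\alpha\}+1=c+1$. Normalizing subtracts $c$, which gives the following, where $V$ fixes which worlds satisfy $\alpha\in F_L$:
\[(\pi^4_\alpha)^4_{\neg\alpha}(w)=\begin{cases}0 & w\in Plaus,\ \mathcal M,w\vDash\neg\alpha\\ 1 & w\in Plaus,\ \mathcal M,w\vDash\alpha\end{cases}\]
This function depends only on $W$, $V$ and $Plaus$. The identical computation for $\mathcal M'$ gives the same function, so $(\mathcal M^4_\alpha)^4_{\neg\alpha}=(\mathcal M'^4_\alpha)^4_{\neg\alpha}$ and the belief sets coincide.

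The main obstacle is the domain bookkeeping in Definition \ref{*}(\ref{*4}). The "otherwise" clause could be read as assigning values to worlds outside $Plaus$ as well. I would argue explicitly that the revision acts on $Dom(\pi)$. Even under the other reading, the computation still depends only on $W$, $V$ and $Plaus$, because the collapse to the two values $0$ and $1$ does not use $\pi$'s values at all. So the conclusion survives either way.
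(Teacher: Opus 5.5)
Your proof is correct and follows essentially the same route as the paper. Both arguments observe that every $\neg\alpha$-world gets one common $\pi^4_\alpha$-value $c$, larger than the value of every $\alpha$-world. The second $*^4$-revision then assigns $c$ to $\neg\alpha$-worlds and $c+1$ to $\alpha$-worlds, and normalization yields the function that is $0$ on $\neg\alpha$-worlds and $1$ on $\alpha$-worlds, which depends only on $W$, $V$ and $Plaus$.

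Your explicit formula for $\pi^4_\alpha$ and your handling of definedness and of the domain ambiguity in $\overline\pi^4_\alpha$ add care that the paper omits. Like the paper, you tacitly assume that the maximum in the definition of $\overline\pi^4_\alpha$ exists.
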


    \textit{Proof: }First, I will prove that
        \begin{lemma}
        For any $\alpha\in F_L$ and $P$-model $\mathcal{M}=(W,\pi,V)$
            \[\min\{\\\overline{(\pi^4_\alpha)}\,^4_{\neg\alpha}(w): w\in W\} =\max\{\pi^4_\alpha(w): \mathcal{M},w\vDash\neg\alpha\}\]
        \end{lemma}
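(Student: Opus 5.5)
The plan is to compute $\overline{(\pi^4_\alpha)}\,^4_{\neg\alpha}$ explicitly. The first step is to show that $\pi^4_\alpha$ is constant on the $\neg\alpha$-worlds of its domain. Once that is known, the second partial revision assigns to every world one of only two values: the old value of the $\neg\alpha$-worlds, or one more than that. The minimum can then be read off directly.

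Throughout, I assume the tacit hypotheses that make both sides meaningful. First, $Plaus$ contains an $\alpha$-world, so that $\max\{\pi(w'):\mathcal M,w'\vDash\alpha\}$ in Definition \ref{*}(\ref{*4}) exists. Second, $Plaus$ contains a $\neg\alpha$-world; otherwise the right-hand side is a maximum over the empty set and $(M^4_\alpha)^4_{\neg\alpha}$ is not defined anyway, by Definition \ref{RBSET}. I also read Definition \ref{*}(\ref{*4}) as acting only on $Dom(\pi)=Plaus$; this is the same reading used in the proof of Lemma \ref{L7}. Since $*^4$ does not change the domain, it equals $Dom(\pi^4_\alpha)$ as well.

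Step 1 (structure of $\pi^4_\alpha$). Let $a=\max\{\pi(w'):\mathcal M,w'\vDash\alpha\}$ and $m=\min\{\pi(w'):\mathcal M,w'\vDash\alpha\}$.
\begin{itemize}
\item By Definition \ref{*}(\ref{*4}), $\overline\pi^4_\alpha(w)=\pi(w)\le a$ for $\alpha$-worlds and $\overline\pi^4_\alpha(w)=a+1$ for $\neg\alpha$-worlds. Hence the minimum of $\overline\pi^4_\alpha$ is $m$.
\item Normalization (Definition \ref{*}(\ref{normalization})) gives $\pi^4_\alpha(w)=\pi(w)-m\le a-m$ on $\alpha$-worlds and $\pi^4_\alpha(w)=c:=a+1-m$ on every $\neg\alpha$-world.
\item Therefore $\max\{\pi^4_\alpha(w):\mathcal M,w\vDash\neg\alpha\}=c$, and every $\alpha$-world has $\pi^4_\alpha$-degree strictly below $c$.
\end{itemize}

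Step 2 (second partial revision). Apply Definition \ref{*}(\ref{*4}) to the model $\mathcal M^4_\alpha$ and the formula $\neg\alpha$. By the Proposition of Section \ref{SP*}, formulas of $F_L$ have the same truth value in $\mathcal M$ and in $\mathcal M^4_\alpha$, so the $\neg\alpha$-worlds are the same in both models.
\begin{itemize}
\item The $\neg\alpha$-worlds keep their degree, which is $c$.
\item The $\alpha$-worlds receive $\max\{\pi^4_\alpha(w'):\mathcal M,w'\vDash\neg\alpha\}+1=c+1$.
\end{itemize}
Hence $\overline{(\pi^4_\alpha)}\,^4_{\neg\alpha}$ takes only the values $c$ and $c+1$. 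The value $c$ is attained because a $\neg\alpha$-world exists in the domain. So the minimum is $c=\max\{\pi^4_\alpha(w):\mathcal M,w\vDash\neg\alpha\}$, which is the claim of the lemma.

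The main obstacle I expect is not the arithmetic but the bookkeeping of domains and nonemptiness. I must make sure that the $\neg\alpha$-worlds outside $Plaus$ are not assigned values by the $\max+1$ clause, which would otherwise make $Dom(\pi^4_\alpha)$ larger than $Plaus$. I must also check that both maxima are taken over nonempty sets. I will handle this by stating the reading of Definition \ref{*}(\ref{*4}) explicitly and by invoking the well-definedness assumptions exactly as in Lemma \ref{L7}. The lemma then feeds the Proposition: after normalization, $(\pi^4_\alpha)^4_{\neg\alpha}$ assigns $0$ to all $\neg\alpha$-worlds of $Plaus$ and $1$ to all $\alpha$-worlds of $Plaus$. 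This depends only on $W$, $V$ and $Plaus$, which is what makes the two iterated belief sets coincide.
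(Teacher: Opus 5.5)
Your proposal is correct and takes essentially the same route as the paper. Both arguments rest on two facts: $\pi^4_\alpha$ is constant on the $\neg\alpha$-worlds, and the second partial revision lifts every $\alpha$-world above that constant, so the minimum is attained at a $\neg\alpha$-world. You compute the values $c=a+1-m$ and $c+1$ explicitly and state the domain and nonemptiness assumptions that the paper leaves tacit. Strictly, the existence of $a$ needs the $\pi$-values of the $\alpha$-worlds to be bounded, not just nonempty, but the definition of $\overline{\pi}^4_\alpha$ already presupposes this.
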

        \textit{Proof: } Notice that $\min\{\overline{(\pi^4_\alpha)}\,^4_{\neg\alpha}(w): w\in W\}= \min\{\pi^4_\alpha(w): \mathcal{M},w\vDash\neg\alpha\}$, since all worlds that do not satisfy $\neg\alpha$ are going to receive $\overline{(\pi^4_\alpha)}\,^4_{\neg\alpha}$-values greater than the ones that satisfy (Definition \ref{*}(\ref{*4})). But since all worlds that do not validate $\alpha$ are assigned the same value in $\pi^4_\alpha$, $\min\{\pi^4_\alpha(w): \mathcal{M},w\vDash\neg\alpha\}= \max\{\pi^4_\alpha(w): \mathcal{M},w\vDash\neg\alpha\}$. This proves the lemma. $\blacksquare$

        Now, take an arbitrary $w\in W$. Either $\mathcal{M},w\vDash\alpha$ or $\mathcal{M},w\nvDash\alpha$
        \begin{enumerate}
            \item Assume that $\mathcal{M},w\vDash\alpha$. We know, by Definition \ref{*}(\ref{*4}), that \[\overline{(\pi^4_\alpha)}\,^4_{\neg\alpha}(w)= \max\{\pi^4_\alpha(w):w\vDash \neg\alpha\}+1\] and, hence, \[(\pi^4_\alpha)^4_{\neg\alpha}(w)= \max\{\pi^4_\alpha(w):w\vDash \neg\alpha\}-\min\{\overline{(\pi^4_\alpha)}\,^4_{\neg\alpha}(w): w\in W\}+1\] The lemma guarantees, then, that $(\pi^4_\alpha)^4_{\neg\alpha}(w)=1$. The exact same argument applies to $\pi'$, which means that $(\pi'\,^4_\alpha)^4_{\neg\alpha}(w)=1= (\pi^4_\alpha)^4_{\neg\alpha}(w)$.
            
            \item Assume that $\mathcal{M},w\nvDash\alpha$. Since $\mathcal{M},w\vDash\neg\alpha$, $\overline{(\pi^4_\alpha)}\,^4_{\neg\alpha}(w)= \pi^4_\alpha(w)$ and \[(\pi^4_\alpha)^4_{\neg\alpha}(w)= \pi^4_\alpha(w)- \min\{\overline{(\pi^4_\alpha)}\,^4_{\neg\alpha}(w): w\in W\}\]Also, (by the previous lemma) \[\min\{\overline{(\pi^4_\alpha)}\,^4_{\neg\alpha}(w): w\in W\}= \max\{\pi^4_\alpha(w):\mathcal{M},w\vDash\neg\alpha\}\]But, by Definition \ref{*}(\ref{*4}), all worlds that do not satisfy $\alpha$ have the same $\pi^4_\alpha$-degree. Therefore, $\max\{\pi^4_\alpha(w):\mathcal{M},w\vDash\neg\alpha\}=\pi^4_\alpha(w)$ and \[(\pi^4_\alpha)^4_{\neg\alpha}(w)= \pi^4_\alpha(w)- \pi^4_\alpha(w)=0\] Again, everything equally holds for $\pi'$, so $(\pi'\,^4_\alpha)^4_{\neg\alpha}(w)=0= (\pi^4_\alpha)^4_{\neg\alpha}(w)$.
        \end{enumerate}
        Since, in both cases, $(\pi'\,^4_\alpha)^4_{\neg\alpha}(w)=(\pi^4_\alpha)^4_{\neg\alpha}(w)$, from the fact that $Plaus=Plaus'$, we have   $(\pi'\,^4_\alpha)^4_{\neg\alpha}=(\pi^4_\alpha)^4_{\neg\alpha}$. Therefore, $(\mathcal{M}^4_\alpha)^4_{\neg\alpha} =(\mathcal{M'}^4_\alpha)^4_{\neg\alpha}$. The belief sets of such models, then, are also equal.$\blacksquare$

    The result states that, in $*^4$, no matter how complex and different two preference functions with the same domain are, after receiving only two new pieces of information, their agents forget all their previous disagreements. This is not the intended result of an AGM-like revision function. AGM was made to formalize \textit{rational} belief change, which means that revising $\alpha$ and then $\neg\alpha$ should lead to a state similar to the one held after revising only $\neg\alpha$. On the other hand, $*^4$ radically changes most epistemic states after realizing this process by always outputting the same preference function.

    $*^5$ Does not, at least as far as I can see, present any paradoxes involving the sets $M^5_\alpha$. But there is, still, a big problem on the way $*^5$ manipulates the preference functions in order to arrive at the desired results. This can be seen by the following proposition:

    \begin{proposition}\label{Paradox *5}
        There are models $\mathcal{M}=(W,\pi,V)$ such that, for all $w\in W$ in which $\mathcal{M},w\vDash\alpha$, $\pi^5_\alpha(w)=\pi(w)$ while, for all $w'$ in which $\mathcal{M},w'\nvDash\alpha$, $\pi^5_\alpha(w')<\pi(w')$.
    \end{proposition}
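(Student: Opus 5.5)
Since the statement is existential, I will give an explicit model and compute $\pi^5_\alpha$ directly from Definition \ref{*}(\ref{*5}) and the normalization step (\ref{normalization}). The target situation has the minimal $\alpha$-world at degree $0$, so the $\alpha$-worlds keep their degrees. The $\neg\alpha$-worlds should have minimal degree strictly greater than $1$, so that the shift $-\min\{\pi(w'):\mathcal M,w'\vDash\neg\alpha\}+1$ is negative. With $\alpha=p_0$, a concrete choice is: $w_1\vDash p_0$ with $\pi(w_1)=0$, and $w_2\vDash\neg p_0$ with $\pi(w_2)=2$. I will take $W=Plaus=\{w_1,w_2\}$ so that every world gets a degree.

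The computation then runs in three steps.
\begin{enumerate}
\item Since $\min\{\pi(w'):\mathcal M,w'\vDash p_0\}=0$, we get $\overline\pi^5_{p_0}(w_1)=0$.
\item Since $\min\{\pi(w'):\mathcal M,w'\vDash\neg p_0\}=2$, we get $\overline\pi^5_{p_0}(w_2)=2-2+1=1$.
\item The minimum over $W$ is $0$, so normalization changes nothing: $\pi^5_{p_0}(w_1)=0=\pi(w_1)$ and $\pi^5_{p_0}(w_2)=1<2=\pi(w_2)$.
\end{enumerate}
This witnesses the proposition. I would also check that $M$ and $M^5_{p_0}$ are well-defined, i.e.\ $0\in Im(\pi)$ and $\mathcal M\nvDash\square^\mathbb N\neg p_0$. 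This makes the example relevant to the AGM setting of Definition \ref{RBSET}.

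For a less degenerate witness I might add more worlds. One option is a second $\neg\alpha$-world of degree $3$, which goes to $2$, so the $\neg\alpha$-worlds keep their internal order but all drop in degree. Another is an $\alpha$-world of degree $1$, which stays at $1$. In general, I can state the condition: whenever $\min\{\pi(w'):\mathcal M,w'\vDash\alpha\}=0$ and $\min\{\pi(w'):\mathcal M,w'\vDash\neg\alpha\}\geq 2$, every $\alpha$-world keeps its degree and every plausible $\neg\alpha$-world decreases by $\min\{\pi(w'):\mathcal M,w'\vDash\neg\alpha\}-1\geq 1$.

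The only subtle point is the domain. If some $\neg\alpha$-world were outside $Plaus$, the clause ``$\pi^5_\alpha(w')<\pi(w')$'' would be meaningless for it. Choosing $Plaus=W$ avoids this, and otherwise the verification is routine.
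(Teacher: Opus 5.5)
Your proposal is correct and matches the paper's proof: the paper uses the same two-world model ($W=\{w,w'\}$, $w\vDash p_0$ with $\pi(w)=0$, $w'\nvDash p_0$ with $\pi(w')=2$) and gets $\pi^5_{p_0}(w)=0$ and $\pi^5_{p_0}(w')=1<2$. Your general condition ($\min$ of the $\alpha$-degrees equal to $0$ and $\min$ of the $\neg\alpha$-degrees at least $2$) is a correct sharpening of the paper's remark that $\pi(w')$ could be any number greater than $1$. Your note about taking $Plaus=W$ makes explicit an assumption the paper leaves implicit.
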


    \textit{Proof: }Limmiting $F_L$ to the single propositional variable $p_0$, consider the case in wich $W=\{w,w'\}$, $w\vDash p_0$, $w'\nvDash p_0$, $\pi(w)=0$ and $\pi(w')=2$. It can be easly checked that $\pi^5_{p_0}(w)=0=\pi(w)$ and $\pi^5_{p_0}(w')=1<2$. $\blacksquare$

    The $\pi$-degree of $w'$ here could be any number greater than $1$. The proposition states that, in some cases, $*^5$-revising by a formula makes the formula less plausible relative to the other formulas \textit{in the perspective of the agent that realized the revision}. This is not the intended behavior of a formalization of belief revisions, since the main idea is to represent situations in which an agent receives some sort of compelling evidence for a belief. 
    
    The results in this section show that, since, with $P*$, van Ditmarsch generalized the notion of "belief" to include quantitative aspects, more postulates about belief revisions should be introduced to deal with the paradoxes just stated. It is not within the scope of this paper to discover what those postulates are, but, as the propositions in this section show, those postulates should be such that they are not satisfied by any of the AGM-like functions presented in \cite{prole}. The only function that truly seems a good contender for an AGM revision within $P*$ is $*^0$. None of the problems stated by Propositions \ref{Paradox *2}-\ref{Paradox *5} occur in $*^0$. By preserving the internal order of the worlds that satisfy and that do not satisfy the revised formula, $*^0$ is faithful to agents' previous epistemic states. After receiving information, agents preserve most of their preferences. In \cite{gar}, G\"arderfors constantly states how \textit{minimality} is one of the main desiderata of rational belief change. Agents should only change their beliefs as much as \textit{needed} to adapt themselves to new available information. Given that $*^0$ seems to be a nice formalization of AGM belief revisions, I have implemented the function on the language \textsf{C}. The code is available at \textcolor{blue}{\url{https://github.com/FelipeNunesCamargo/AGM-research}}) and it outputs the model $\mathcal{M}^0_\alpha$ when $\mathcal M$ and $\alpha$ are given by the user. Also, the code can be easily adapted for other revision functions definable in $P*$.

\section{Final Remarks}

This paper has proposed new tools to develop studies on single-agent quantitative DELs with preference functions. This was done by adapting AGM postulates for the DEL setting $P*$. Based on the exposition of Sections \ref{SAGM}-\ref{SP*}, this paper provided a new revision function that is a good fit for capturing the philosophical notion of a belief revision through the lens of the AGM postulates. This was done by pointing out the existence of paradoxes on several belief revision functions presented in \cite{prole}, as well as the apparent lack of such paradoxes for the proposed function.

The rest of the final remarks is devoted to future works. Some of those may involve the development of syntactic methods (Hilbert calculi, Natural Deduction, etc.) for dealing with $P$ (as well as some revision operators of $P*$). Modal tableau, similar to the ones in \cite{pri}, but with nodes including both a world and a model, strikes me as a viable decision procedure. Beyond that, the explicit formulation of the postulates that extend the adapted versions of AGM in order to avoid paradoxes such as the ones presented in Propositions \ref{Paradox *2}-\ref{Paradox *5} would be a great addition. The traditional AGM theory based on Postulates \ref{K*1}-\ref{K*8} has developed, as shown in \cite{han}, several equivalences and relations between dynamic processes. Revision is one of these processes. It seems possible to discover and prove adapted versions of at least some of those equivalences. Furthermore, the use of $\mathbb{N}$ as the codomain of preference functions is a somewhat arbitrary choice. To obtain more prolific applications of $P*$ one could replace $\mathbb{N}$ for $\mathbb{R}$ or even $\mathbb{R}^n$ and use the mathematical apparatus of Calculus and Analysis for defining complex revision operations. This could drastically increase the applied potential of $P*$ as well as its expressive power, since it would capture more complex quantitative epistemic notions. At last, one may consult \cite{prole} for more directions to future developments. 

\begin{credits}
\subsubsection{\ackname}  This paper gathers the main original results of an undergraduate research project carried out by the author at CLE-UNICAMP (Centre of Logic, Epistemology and the History of Science at State University of Campinas). The study was funded by FAPESP (Research Support Foundation of the State of São Paulo) (grant number 2024/07209-7). The advisor was Marcelo Esteban Coniglio, for whose guidance I am thankful. Furthermore, I thank the anonymous referee, who provided great suggestions and corrections.

\subsubsection{\discintname}
The authors have no competing interests to declare that are
relevant to the content of this article.
\end{credits}
%
%
%
%

\end{document}